\numberwithin{equation}{section}
\newtheorem{lemma}{Lemma}[section]
\newtheorem{proposition}{Proposition}[section]
\newtheorem{theorem}{Theorem}[section]
\begin{document}

\begin{frontmatter}
\title{Estimating the Spectral Density \\ of Large Implicit Matrices}
\runtitle{Estimating the Spectral Density of Large Implicit Matrices}

\begin{aug}
\author{\fnms{Ryan P.} \snm{Adams}%
\thanksref{google}\thanksref{princeton}}, %
\author{\fnms{Jeffrey} \snm{Pennington}%
\thanksref{google}}, %
\author{\fnms{Matthew J.} \snm{Johnson}%
\thanksref{google}}, %
\\ \author{\fnms{Jamie} \snm{Smith}%
\thanksref{google}}, %
\author{\fnms{Yaniv} \snm{Ovadia}%
\thanksref{google}}, %
\author{\fnms{Brian} \snm{Patton}%
\thanksref{google}}, %
\and%
\author{\fnms{James} \snm{Saunderson}%
\thanksref{monash}}%

\runauthor{R.P. Adams et al.}

\address{Google\thanksmark{google}, Monash University\thanksmark{monash} and Princeton University\thanksmark{princeton}}
\end{aug}

\begin{abstract}
Many important problems are characterized by the eigenvalues of a
large matrix.  For example, the difficulty of many optimization
problems, such as those arising from the fitting of large models in
statistics and machine learning, can be investigated via the spectrum
of the Hessian of the empirical loss function.  Network data can be
understood via the eigenstructure of a graph Laplacian matrix using
spectral graph theory.  Quantum simulations and other many-body
problems are often characterized via the eigenvalues of the solution
space, as are various dynamic systems.  However, na\"{i}ve eigenvalue
estimation is computationally expensive even when the matrix can be
represented; in many of these situations the matrix is so large as to
only be available \emph{implicitly} via products with vectors.  Even
worse, one may only have noisy estimates of such matrix vector
products.  In this work, we combine several different techniques for
randomized estimation and show that it is possible to construct
unbiased estimators to answer a broad class of questions about the
spectra of such implicit matrices, even in the presence of noise.  We
validate these methods on large-scale problems in which graph theory and random matrix theory provide ground truth.

\end{abstract}

\end{frontmatter}

\section{Introduction}

The behaviors of scientific phenomena and complex engineered systems are often characterized via matrices.  The entries of these matrices may reflect interactions between variables, edges between vertices in a graph, or the local geometry of a complicated surface.  When the systems under consideration become large, these matrices also become large and it rapidly becomes uninformative --- and potentially infeasible --- to investigate the individual entries.  Instead, we frame our questions about these matrices in terms of aggregate properties that are insensitive to irrelevant details.

To this end, the eigenvalues and eigenvectors of these matrices often provide a succinct way to identify insightful global structure of complex problems.  The eigenvalues of the Hessian matrix in an optimization problem help us understand whether the problem is locally convex or has saddle points.  The influential PageRank algorithm \citep{pagerank} examines the principal eigenvector of a graph for the most ``important'' vertices.  The sum of the eigenvalues of an adjacency matrix is closely related to the number of triangles in a graph, and insights from the eigenvalues of the Laplacian matrix have given rise to the field of spectral graph theory (see, e.g., \citet{chung1997spectral}).  In dynamical systems, numerical algorithms, and control, eigenstructure helps us analyze local stability and typical behaviors~\citep{luenberger1979introduction,bertsekas2016nonlinear}.

The problem of estimating the eigenvalue distribution of large matrices has a particularly long history in fields such as condensed matter physics and quantum chemistry, where one often wishes to estimate the \emph{density of states} (DOS).  The stochastic methods we develop in this paper build upon these decades of work, which we briefly review here. One of the core components of our procedure is the use of a stochastic estimator for the trace of a matrix by constructing a quadratic form with a random vector.  This procedure is generally attributed to \citet{hutchinson1989stochastic}, but it was coincidentally introduced in \citet{skilling1989eigenvalues}.  This latter work by Skilling sketches out an approach to eigenvalue estimation based on generalized traces and Chebyshev polynomials that strongly influences the present paper.  Although Lanczos and Chebyshev polynomial expansions had been used for many years in physics and quantum chemistry (see, e.g., \citet{haydock1972electronic}, \citet{tal1984accurate}), the combination of polynomial expansion and randomized trace estimation as in \citet{skilling1989eigenvalues} led to the \emph{kernel polynomial method} for estimation the density of states in \citet{silver1994densities} and \citet{silver1996kernel}, in which a smoothing kernel is convolved with the spectral density in order to reduce the Gibbs phenomenon.  See \citet{weisse2006kernel} for a review of the kernel polynomial method and a broader historical perspective on these estimation techniques in physics and quantum chemistry.  Outside of the kernel polynomial method specifically, these methods have also found use in quantum chemistry for stochastic Kohn-Sham density functional theory \citep{baer2013self}.

More broadly, these estimators have been used to perform estimation of the density of states, e.g., the spectral density of Hamiltonian systems.  For example, \citet{drabold1993maximum} used moment-constrained random vectors within the trace estimator to form estimates of the DOS based on the maximum entropy principle.  
\citet{jeffrey1997calculation} took a related approach and used the trace estimator to compute first and second moments with which a Gaussian can be fit.  
\citet{zhu1994orthogonal} and \citet{parker1996matrix} used a truncated Chebyshev approximation to the delta function to estimate the spectral density of a Hamiltonian.

Motivated in part by these physics applications, a broader literature has developed around randomized trace estimators applied to polynomial expansions.  For example, \citet{bai1996some}, \citet{bai1996bounds}, and \citet{golub1994matrices} developed a set of fundamentally similar techniques using a quadrature-based view on the estimation of generalized traces, proving upper and lower bounds on the estimates.  These techniques are closely related to those presented here, although we develop unbiased estimates of the matrix function in addition to the unbiased trace estimator, and address the case where the matrix-vector product is itself noisy.  We also view the problem as one of estimating the entire spectral density, rather than the efficient estimation of a single scalar.  \citet{avron2011randomized} also studied theoretical properties of the trace estimator, and developed convergence bounds, later extended by \citet{roosta2015improved}.

\citet{lin2016approximating} provides an excellent survey of different techniques for estimating the spectral density of a real symmetric matrix using matrix-vector products, including the Chebyshev expansions and kernel smoothing built upon in the present paper. \citet{lin2017randomized} proposes variations of these techniques that have better convergence characteristics when the target matrix is approximately low rank.  \citet{xi2017fast} performs a theoretical and empirical investigation of similar ideas for the generalized eigenvalue problem.  Closely related to the general DOS problem is the estimation of the number of eigenvalues in a particular interval, as such estimates can be used to construct histograms, and \citet{di2016efficient} proposes a class of KPM-like techniques for achieving this.

One function of the eigenspectrum that is of particular interest is the determinant; the log determinant can be viewed as a generalized trace under the matrix logarithm, as described in the proceeding section.  To our knowledge, randomized polynomial-based estimates of the log determinant were first introduced by \citet{skilling1989eigenvalues}, but were later rediscovered by \citet{bai1996some} and \citet{barry1999monte} and then again by \citet{han2015large} and \citet{boutsidis2015randomized}, albeit with more theoretical investigation.  An alternative randomized scheme for the determinant was proposed in \citet{saibaba2016randomized}.  Finally, there have also been recent proposals that incorporate additional structural assumptions in forming such estimates \citep{peng2015large,fitzsimons2017entropic,fitzsimons2017bayesian}.

Similarly, matrix inversion can be viewed as a transformation of the eigenspectrum and can be approximated via polynomials.  The Neumann series is one classic example---essentially the Taylor series of the matrix inverse---when the spectral radius is less than one.  This approximation has been used for \emph{polynomial preconditioning}, originally proposed in \citet{rutishauser1959theory}, although see also \citet{johnson1983polynomial} and \citet{saad1985practical}.  More recently, these series have been proposed as an efficient way to solve linear systems with the Hessian in support of second-order stochastic optimization \citep{agarwal2016second}, using a very similar procedure to that described in Section~\ref{sec:series} but with Neumann series.

Although this history represents a rich literature on randomized estimation of the eigenspectra of implicit matrices, this paper makes several distinct contributions.  First, we introduce a randomized variation of the Chebyshev polynomial expansion that leads to provably \emph{unbiased} estimates when combined with the stochastic trace estimator.  Second, we show that such estimators can be generalized to the case where the matrix-vector product is itself randomized, which is of critical interest to stochastic optimization methods for large scale problems in statistics and machine learning.  Unbiasedness is preserved in this case using generalizations of the Poisson estimator~\citep{wagner1987unbiased,fearnhead2010random,papas2011poisson}.  Third, we introduce a new order-independent smoothing kernel for Chebyshev polynomials, with appealing properties.  Finally, we apply this ensemble of techniques at large scale to matrices with known spectra, in order to validate the methods.

\section{Problem Formulation}
\label{sec:problem}
The matrix of interest~$\brmA$ is assumed to be real, square, symmetric, and and of dimension~$D$, i.e.,~${\brmA\in\reals^{D \times D}}$ where ${\rmA_{ij}=\rmA_{ji}}$.  Many of the results in this paper generalize to complex and asymmetric matrices, but the real symmetric case is of primary interest for Hessians and undirected graphs.  The primary challenge to be addressed is that~$D$ can be very large.  For example, the modern stochastic optimization problems arising from the fitting of neural networks may have tens of billions of variables \citep{coates2013deep} and the graph of web pages has on the order of a trillion vertices \citep{webpages}.  When~$\brmA$ is this large, it cannot be explicitly represented as a matrix, but can only be indirectly accessed via matrix-vector products of the form~$\brmA\brmx$, where we can choose~$\brmx\in\reals^D$.  An additional challenge arises in the case of large scale model fitting, where the loss surface may be a sum of billions of terms, and so even~$\brmA\brmx$ cannot be feasibly computed.  Instead, an unbiased but randomized estimator~$\hat{\brmA}\brmx$ is formed from a ``mini-batch'' of data, as in stochastic optimization \citep{robbins1951stochastic}.

We will assume that all of the eigenvalues of~$\brmA$ are within the interval~$(-1,1)$, something achievable by dividing~$\brmA$ by its operator norm plus a small constant.  With this assumption, the abstraction we choose for investigating~$\brmA$ is the \emph{generalized trace}~$\trace(F(\brmA))$, where ${F:\reals^{D \times D}\to\reals^{D \times D}}$ is an operator function that generalizes a scalar function ${f:(-1,1)\to\reals}$ to square matrices.  We take~$f$ to have a Chebyshev expansion~${\gamma_0, \gamma_1, \ldots}$:
\begin{align}
\label{eqn:scalar-cheb}
f(a) &= \sum_{k=0}^\infty\gamma_k T_k(a)\,,
\end{align}
where the functions~${T_k:(-1,1)\to[-1,1]}$ are the Chebyshev polynomials of the first kind.  These are defined via the recursion
\begin{align}
T_0(a) &= 1 & T_1(a) &= a & T_k(a) &= 2aT_{k-1}(a)-T_{k-2}(a)\,.
\end{align}
Chebyshev approximations are appealing as they are optimal under the~$L_\infty$ norm.   Other polynomial expansions are optimal under different function norms, but this generalization is not considered here.  We can generalize the Chebyshev polynomials to square matrices using the recursion
\begin{align}
T_0(\brmA) &= \identity & T_1(\brmA) &= \brmA & T_k(\brmA) &= 2\brmA T_{k-1}(\brmA) - T_{k-2}(\brmA)\,.
\end{align}
\begin{proposition}
Let~$\brmA$ be diagonalizable into~${\brmA=\brmU^{\trans}\bLambda\brmU}$ with orthonormal~$\brmU$ and diagonal~$\bLambda$.  Then the matrix Chebyshev polynomial~$T_k(\brmA)$ applies the scalar Chebyshev polynomial~$T_k(a)$ to each of the eigenvalues of~$\brmA$.
\end{proposition}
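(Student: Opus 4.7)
The plan is to prove the identity $T_k(\brmA) = \brmU^{\trans} T_k(\bLambda)\,\brmU$ by induction on $k$, where $T_k(\bLambda)$ is the diagonal matrix whose $i$-th diagonal entry is the scalar value $T_k(\lambda_i)$. Once this identity is established, the proposition follows immediately, since a matrix of the form $\brmU^{\trans} D \brmU$ with $D$ diagonal and $\brmU$ orthonormal has eigenvalues equal to the diagonal entries of $D$. The only preliminary fact needed is that any polynomial in a diagonal matrix is itself diagonal and acts entrywise on the diagonal; this is a direct consequence of the fact that products and sums of diagonal matrices are diagonal, with entries given by the corresponding products and sums of the scalar entries.

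The base cases are immediate: $T_0(\brmA) = \identity = \brmU^{\trans}\identity\,\brmU = \brmU^{\trans}T_0(\bLambda)\,\brmU$ and $T_1(\brmA) = \brmA = \brmU^{\trans}\bLambda\,\brmU = \brmU^{\trans} T_1(\bLambda)\,\brmU$ by the definition of the diagonalization.

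For the inductive step, I would assume the invariant holds for both $k-1$ and $k-2$, substitute into the matrix recursion $T_k(\brmA) = 2\brmA\, T_{k-1}(\brmA) - T_{k-2}(\brmA)$, and use the orthonormality relation $\brmU\,\brmU^{\trans} = \identity$ to collapse the product of $\brmU$ and $\brmU^{\trans}$ that appears between $\bLambda$ and $T_{k-1}(\bLambda)$. This yields
\[
T_k(\brmA) \;=\; \brmU^{\trans}\bigl(2\bLambda\, T_{k-1}(\bLambda) - T_{k-2}(\bLambda)\bigr)\brmU.
\]
Because $\bLambda$ is diagonal, the parenthesized expression is a diagonal matrix whose $i$-th entry is $2\lambda_i T_{k-1}(\lambda_i) - T_{k-2}(\lambda_i) = T_k(\lambda_i)$ by the scalar Chebyshev recursion, so the parenthesized expression equals $T_k(\bLambda)$ and the induction closes.

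There is essentially no serious obstacle. The \emph{only} place where care is required is in interpreting $T_k(\bLambda)$ and verifying that polynomial evaluation commutes with conjugation by an orthonormal matrix; both reduce to the observation that $\brmU \brmU^{\trans} = \identity$ allows arbitrary numbers of intermediate $\brmU^{\trans}\brmU$ factors to be inserted or removed, so every polynomial in $\brmA$ can be written as $\brmU^{\trans}$ times the corresponding polynomial in $\bLambda$ times $\brmU$.
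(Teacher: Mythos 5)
Your proof is correct and follows essentially the same inductive argument as the paper: verify the base cases $k=0,1$ by inspection, then substitute the inductive hypothesis into the two-term recursion and use $\brmU\brmU^{\trans}=\identity$ to collapse to $\brmU^{\trans}(2\bLambda T_{k-1}(\bLambda)-T_{k-2}(\bLambda))\brmU = \brmU^{\trans}T_k(\bLambda)\brmU$. The only difference is expository: you are a bit more explicit than the paper about why $T_k(\bLambda)$ acts entrywise on the diagonal, which the paper leaves implicit.
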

\begin{proof}
The proposition is true by inspection for~$T_0(\brmA)$ and~$T_1(\brmA)$.  We proceed inductively using the two-term recursion.  Assume that the proposition is true for~$T_{k-1}(\brmA)$ and~$T_{k-2}(\brmA)$, then
\begin{align}
T_{k-1}(\brmA) &= \brmU^{\trans}T_{k-1}(\bLambda)\brmU &
T_{k-2}(\brmA) &= \brmU^{\trans}T_{k-2}(\bLambda)\brmU\,.
\end{align}
Applying the Chebyshev recursion:
\begin{align}
T_k(\brmA) &= 2\brmA T_{k-1}(\brmA) - T_{k-2}(\brmA)\\
&= 2\brmU^{\trans}\bLambda\brmU\brmU^{\trans}T_{k-1}(\bLambda)\brmU - \brmU^{\trans}T_{k-2}(\bLambda)\brmU\\
&= 2\brmU^{\trans}\bLambda T_{k-1}(\bLambda)\brmU - \brmU^{\trans}T_{k-2}(\bLambda)\brmU\\
&= \brmU^{\trans}\left(2\bLambda T_{k-1}(\bLambda) - T_{k-2}(\bLambda)\right)\brmU\\
&= \brmU^{\trans} T_k(\bLambda) \brmU\,.
\end{align}
\end{proof}
When~$f(a)$ has Chebyshev coefficients~$\gamma_1, \gamma_2,\ldots$, we thus define~$F(\brmA)$ as
\begin{align}
\label{eqn:matrix-cheb}
F(\brmA) &= \sum_{k=0}^\infty \gamma_k T_k(\brmA) \,.
\end{align}
This matrix function can now be seen to be simply applying~$f(a)$ to the eigenvalues of~$\brmA$.  Returning to the generalized trace, we see that it corresponds to first applying $f$ to each eigenvalue and then summing them:
\begin{align}
\trace(F(\brmA)) &= \sum_{d=1}^D f(\lambda_d(\brmA))= \sum_{d=1}^D \sum_{k=0}^\infty \gamma_k T_k(\lambda_d(\brmA))
= \sum_{k=0}^\infty \gamma_k \trace\big( T_k(\brmA) \big)
\,,
\end{align}
where the~$\lambda_d(\brmA)$ are the eigenvalues of~$\brmA$. By computing the traces of the Chebyshev polynomials, we can then can choose different~$f$ to ask different questions about the spectra of~$\brmA$.  In the proceeding sections, we will describe the construction of estimators~$\hat{S}$ such that~${\mathbb{E}[\hat{S}] = \trace(F(\brmA))}$.  

The generalized trace is a surprisingly flexible tool for interrogating the spectrum.  In particular, it enables us to look at the eigenvalues via the normalized spectral measure, which is a sum of Dirac measures:
\begin{align}
\label{eqn:discrete-density}
\psi(\lambda) &= \frac{1}{D}\sum_{d=1}^D \delta(\lambda - \lambda_d(\brmA))\,.
\end{align}
A generalized trace~$\trace(F(\brmA))$ can thus be seen as an expectation under this spectral measure:
\begin{align}
\trace(F(\brmA)) &= D\int^{1}_{-1} f(\lambda)\,\psi(\lambda)\,\diff\lambda
= \sum_{d=1}^D f(\lambda_d(\brmA))\,.
\end{align}
Through the spectral measure, one can compute empirical summaries of the matrix of interest, such as \emph{How many eigenvalues are within~$\epsilon$ of zero?} or \emph{How many negative eigenvalues are there?}.  Such questions can be viewed as particular instances of functions~$f$ where they are, e.g., indicator-like functions.

In this paper, we will take this general view and seek to construct a general purpose spectral density estimate from the Chebyshev traces of~$\brmA$ using a smoothing kernel~$K(\lambda,\lambda')$.  If ${K(\lambda,\lambda')\geq 0}$ and~${\int^{1}_{-1}K(\lambda,\lambda')\,d\lambda =1}$, then we can define the smoothed spectral density
\begin{align}
\tilde{\psi}(\lambda) &= \frac{1}{D}\sum_{d=1}^D K(\lambda, \lambda_d(\brmA))\,.
\end{align}
In the generalized trace framework, we can ask pointwise questions about~$\tilde{\psi}$ by constructing a~$\lambda$-specific~${f_\lambda(a) = K(\lambda, a)}$.  We are seeking to estimate a one-dimensional density, so a dense collection of pointwise estimates is useful for visualization, diagnosis, or to ask various \emph{post hoc} questions such as those above.  As in other kinds of density estimation, smoothing is sensible as it leads to practical numerics and we typically do not care about high sensitivity to small perturbations of the eigenvalues.  Smoothing also minimizes the effects of the Gibbs phenomenon when using Chebyshev polynomials.  Various smoothing kernels have been proposed for the case where the polynomials are truncated (See \cite{weisse2006kernel} for a discussion.), but in this work we will not have a fixed-order truncation and so we will derive a kernel that is 1)~appropriate for density estimation, 2)~reduces the Gibbs phenomenon, 3)~maps naturally to~$(-1,1)$, and 4)~does not depend on the order of the approximation. 

We will break down the task of computing estimates of generalized traces into two parts: the application of~$F$ to the matrix of interest (Section~\ref{sec:series}), and the computation of the trace of the result (Section~\ref{sec:traces}).  In both cases, we will use randomized algorithms to form unbiased estimates.

\section{Randomized Matrix Power Series Estimation}
\label{sec:series}

In this section we discuss the construction of unbiased Monte Carlo estimates
of functions applied to symmetric matrices.  Two forms of Monte Carlo estimator
will be used: 1) multiplications of independent random matrix variates in order
to get unbiased estimates of matrix powers, and 2) Monte Carlo estimation of
infinite polynomial series.  These are framed in terms of randomized Chebyshev
polynomials, slightly generalizing the scalar-valued power series importance
sampling estimator described in~\citet[Section 4.6]{papas2011poisson}.

The core observation in the randomized estimation of infinite series is that sums such as Equations~\eqref{eqn:scalar-cheb} and \eqref{eqn:matrix-cheb} can be turned into expectations by introducing a proposal distribution that has support on the non-negative natural numbers~$\naturals_0$.  The estimator is then a weighted random power or polynomial of random order.  We are here making the further assumption that we only have randomized estimates of the matrix and so these matrix powers must be constructed with care: the power of an expectation is not the expectation of the power.  Instead, we use a product of independent estimates of the matrix, with the $k$th independent estimate denoted~$\hat{\brmA}_k$, to construct an unbiased estimate of the matrix power.  For example, in the case of fitting a large statistical or machine learning regression model, each Hessian estimate~$\hat{\brmA}_k$ might arise from small and independent subsets of data.  This relieves us of the burden of computing a Hessian that is a sum over millions or billions of data.  The following lemma addresses such an unbiased construction for Chebyshev polynomials.
\begin{lemma}
\label{lemma:chebyshev}
Let~${\hat{\brmA}_1, \hat{\brmA}_2, \ldots}$ be a sequence of independent, unbiased estimators of a symmetric real matrix~${\brmA\in\reals^{D \times D}}$.  From this sequence, construct a new sequence as follows:
\begin{align}
\label{eq:chebiter}
\hat{\brmT}_0 &= \identity &
\hat{\brmT}_1 &= \hat{\brmA}_1 &
\hat{\brmT}_k &= 2\hat{\brmA}_{k} \hat{\brmT}_{k-1} - \hat{\brmT}_{k-2}\,.
\end{align}
Then $\hat{\brmT}_k$ is an unbiased estimate of the matrix-valued Chebyshev polynomial of the first kind~$T_k(\brmA)$.
\end{lemma}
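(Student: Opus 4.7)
The plan is a straightforward induction on $k$, mirroring the deterministic proof used for the previous proposition, with the extra ingredient that independence of the $\hat{\brmA}_i$ lets expectations factor across products. The main thing to be careful about is keeping track of which randomness appears in each $\hat{\brmT}_k$, since we need to assert $\hat{\brmA}_k \perp \hat{\brmT}_{k-1}$ when pulling the expectation inside the product.

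First I would handle the base cases. Since $\hat{\brmT}_0 = \identity$ is deterministic, $\mathbb{E}[\hat{\brmT}_0] = \identity = T_0(\brmA)$, and $\mathbb{E}[\hat{\brmT}_1] = \mathbb{E}[\hat{\brmA}_1] = \brmA = T_1(\brmA)$ by hypothesis. Next I would observe by a secondary induction (or simply by reading off the recursion in Equation~\eqref{eq:chebiter}) that $\hat{\brmT}_{k-1}$ is a measurable function of $\hat{\brmA}_1,\ldots,\hat{\brmA}_{k-1}$ only, and similarly $\hat{\brmT}_{k-2}$ depends only on $\hat{\brmA}_1,\ldots,\hat{\brmA}_{k-2}$. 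By the assumed mutual independence of the $\hat{\brmA}_i$, this gives $\hat{\brmA}_k \perp \hat{\brmT}_{k-1}$ and $\hat{\brmA}_k \perp \hat{\brmT}_{k-2}$.

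For the inductive step, assume $\mathbb{E}[\hat{\brmT}_{k-1}] = T_{k-1}(\brmA)$ and $\mathbb{E}[\hat{\brmT}_{k-2}] = T_{k-2}(\brmA)$. Taking expectations through the recursion and using linearity together with the independence noted above, I would compute
\begin{align*}
\mathbb{E}[\hat{\brmT}_k] &= 2\,\mathbb{E}[\hat{\brmA}_k \hat{\brmT}_{k-1}] - \mathbb{E}[\hat{\brmT}_{k-2}] \\
&= 2\,\mathbb{E}[\hat{\brmA}_k]\,\mathbb{E}[\hat{\brmT}_{k-1}] - \mathbb{E}[\hat{\brmT}_{k-2}] \\
&= 2\,\brmA\, T_{k-1}(\brmA) - T_{k-2}(\brmA) = T_k(\brmA),
\end{align*}
where the last equality is the matrix Chebyshev recursion from the previous proposition. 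This closes the induction.

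The only nontrivial point is the factorization $\mathbb{E}[\hat{\brmA}_k \hat{\brmT}_{k-1}] = \mathbb{E}[\hat{\brmA}_k]\mathbb{E}[\hat{\brmT}_{k-1}]$ for matrix-valued random variables; this is not automatic for arbitrary matrix products (since matrix multiplication is noncommutative), but it holds entrywise because each entry of $\hat{\brmA}_k \hat{\brmT}_{k-1}$ is a sum of products of an entry of $\hat{\brmA}_k$ with an entry of $\hat{\brmT}_{k-1}$, and scalar independence gives the factorization term by term. I would state this explicitly since it is the one place where the randomized estimator differs in spirit from the deterministic recursion, and it is exactly why the lemma requires \emph{independent} copies $\hat{\brmA}_k$ rather than reusing a single noisy estimate.
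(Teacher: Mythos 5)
Your proof is correct and follows essentially the same route as the paper's: induction on $k$ with base cases $k\in\{0,1\}$, then using independence of $\hat{\brmA}_k$ from $\hat{\brmT}_{k-1}$ to factor $\mathbb{E}[2\hat{\brmA}_k\hat{\brmT}_{k-1}-\hat{\brmT}_{k-2}]$ and apply the Chebyshev recursion. Your extra remarks --- that $\hat{\brmT}_{k-1}$ is $\sigma(\hat{\brmA}_1,\ldots,\hat{\brmA}_{k-1})$-measurable, and that the matrix expectation factorization reduces to the scalar one entrywise --- are correct and make explicit what the paper leaves implicit, but they do not change the argument.
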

\begin{proof}
Unbiasedness for~${k\in\{0,1\}}$ is clear by inspection.  For~${k>1}$, we proceed inductively by assuming that~$\hat{\brmT}_{k-1}$ and~$\hat{\brmT}_{k-2}$ are unbiased.  $\hat{\brmA}_k$ is independent of~$\hat{\brmT}_{k-1}$ and therefore
\begin{align}
\mathbb{E}[2\hat{\brmA}_{k} \hat{\brmT}_{k-1} - \hat{\brmT}_{k-2}]
&= 2\mathbb{E}[\hat{\brmA}_k]\mathbb{E}[\hat{\brmT}_{k-1}] - \mathbb{E}[\hat{\brmT}_{k-2}] = 2\brmA T_{k-1}(\brmA) - T_{k-2}(\brmA)\,.
\end{align}
\end{proof}

Although this randomized estimator is unbiased, its variance explodes as a function of~$k$. The intuitive reason for this is that deterministic Chebyshev polynomials have a somewhat magical property: even though the leading coefficients increase exponentially with~$k$, the terms cancel out in such a way that the sum is still within the interval~$[-1,1]$.  However, in the case here the polynomial is random.  The noise overwhelms the careful interaction between terms and results in an explosion of variance.  However, it is possible to find an upper bound for this variance and identify situations where the Chebyshev coefficients decay fast enough to achieve finite variance.
For a $D\times D$ matrix $\brmA$, we denote its spectral norm by $\|\brmA\|$ and its Frobenius norm by 
$\|\brmA\|_F = (\sum_{i,j=1}^{D}A_{ij}^2)^{1/2}$.
\begin{lemma}
\label{lemma:cheb-variance}
Let~${\hat{\brmA}_1, \hat{\brmA}_2, \ldots}$ be a sequence of independent, unbiased estimators of a 
symmetric real matrix~${\brmA\in\reals^{D \times D}}$ and let $\hat{\brmT}_k$ be defined as in~\eqref{eq:chebiter}.
Suppose that 
\[ 4\mathbb{E}[\|\hat{\brmA}_k\|^2] + 2\mathbb{E}[\|\hat{\brmA}_k\|] + 1\leq \alpha\quad\textup{for all $k$.}\]
Then, for all $k\geq 0$, 
\[\mathbb{E}[\|\hat{\brmT}_k\|_F^2] \leq D\alpha^k.\]
\end{lemma}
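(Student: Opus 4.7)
The plan is to prove the bound by induction on $k$, turning the Chebyshev recursion \eqref{eq:chebiter} into a scalar two-step recursion for $M_k := \mathbb{E}[\|\hat{\brmT}_k\|_F^2]$. Write $p_k = \mathbb{E}[\|\hat{\brmA}_k\|^2]$ and $q_k = \mathbb{E}[\|\hat{\brmA}_k\|]$ so the hypothesis reads $4p_k + 2q_k + 1 \leq \alpha$, which in particular forces $\alpha \geq 1$. The base cases are immediate: $M_0 = \|\identity\|_F^2 = D = D\alpha^0$, and since $\|\brmA\|_F^2 \leq D\|\brmA\|^2$ for any $D\times D$ matrix, $M_1 = \mathbb{E}[\|\hat{\brmA}_1\|_F^2] \leq D\,p_1 \leq D\alpha/4 \leq D\alpha$.

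For $k\geq 2$, I would expand the Frobenius-squared norm of $\hat{\brmT}_k = 2\hat{\brmA}_k\hat{\brmT}_{k-1} - \hat{\brmT}_{k-2}$ using the Hilbert--Schmidt inner product:
\begin{align*}
\|\hat{\brmT}_k\|_F^2 = 4\|\hat{\brmA}_k\hat{\brmT}_{k-1}\|_F^2 - 4\langle \hat{\brmA}_k\hat{\brmT}_{k-1},\,\hat{\brmT}_{k-2}\rangle + \|\hat{\brmT}_{k-2}\|_F^2.
\end{align*}
The first summand is bounded by $\|\hat{\brmA}_k\|^2\,\|\hat{\brmT}_{k-1}\|_F^2$ via the operator--Frobenius inequality, and the cross term (in absolute value) by $\|\hat{\brmA}_k\|\,\|\hat{\brmT}_{k-1}\|_F\,\|\hat{\brmT}_{k-2}\|_F$ via Cauchy--Schwarz plus the same operator-norm bound. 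Taking expectations and exploiting the independence of $\hat{\brmA}_k$ from $(\hat{\brmT}_{k-1},\hat{\brmT}_{k-2})$, followed by AM--GM to symmetrize $\|\hat{\brmT}_{k-1}\|_F\,\|\hat{\brmT}_{k-2}\|_F \leq \tfrac12(\|\hat{\brmT}_{k-1}\|_F^2 + \|\hat{\brmT}_{k-2}\|_F^2)$, yields the linear recursion
\begin{align*}
M_k \leq (4p_k + 2q_k)\,M_{k-1} + (2q_k + 1)\,M_{k-2}.
\end{align*}

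Plugging in the inductive hypotheses $M_{k-1} \leq D\alpha^{k-1}$ and $M_{k-2} \leq D\alpha^{k-2}$ and dividing by $D\alpha^{k-2}$, the desired bound $M_k \leq D\alpha^k$ reduces to the scalar inequality $(4p_k+2q_k)\alpha + (2q_k+1) \leq \alpha^2$. Using $4p_k+2q_k \leq \alpha - 1$, the left-hand side is at most $\alpha(\alpha-1) + (2q_k+1) = \alpha^2 - \alpha + (2q_k+1)$, and the hypothesis $\alpha \geq 4p_k + 2q_k + 1 \geq 2q_k + 1$ completes the induction.

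The main delicate point I anticipate is calibrating the cross-term bound: the estimator $\hat{\brmA}_k$ must be peeled off to use independence, but too coarse a splitting (e.g.\ bounding $\|\hat{\brmT}_{k-1}\|_F\|\hat{\brmT}_{k-2}\|_F$ by Cauchy--Schwarz on the random variables and then replacing $\sqrt{M_{k-1}M_{k-2}}$ by $\sqrt{D}\alpha^{(2k-3)/2}$) yields a constant $4q_k$ instead of $2q_k$ in the recursion and fails to match the assumed form $4p_k+2q_k+1\leq\alpha$. The symmetric AM--GM step $xy \leq \tfrac12(x^2+y^2)$ applied \emph{inside} the expectation is precisely what distributes the $q_k$ factor evenly between the $M_{k-1}$ and $M_{k-2}$ coefficients, so that the inductive step only needs $4p_k+2q_k \leq \alpha-1$ and $2q_k+1 \leq \alpha$, both of which are implied by the hypothesis.
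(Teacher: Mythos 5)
Your argument is correct and yields exactly the bound claimed, with the same constant $\alpha$, but the route is genuinely different from the paper's. The paper lifts the problem to a $2D\times 2D$ second-moment matrix
\[
\brmB_k = \mathbb{E}\begin{bmatrix}\hat{\brmT}_k\hat{\brmT}_{k}^{\trans} & \hat{\brmT}_k\hat{\brmT}_{k-1}^{\trans}\\ \hat{\brmT}_{k-1}\hat{\brmT}_k^{\trans} & \hat{\brmT}_{k-1}\hat{\brmT}_{k-1}^{\trans}\end{bmatrix},
\]
writes the Chebyshev recursion as conjugation by the companion matrix $\begin{bmatrix}2\hat{\brmA}_k & -\brmI\\ \brmI & 0\end{bmatrix}$, and uses the positive-semidefinite trace inequality $\trace(\brmX\brmY)\leq\|\brmX\|\trace(\brmY)$ together with a block-diagonal/off-diagonal split to obtain the one-step recursion $t_k+t_{k-1}\leq\alpha(t_{k-1}+t_{k-2})$, which it then unrolls. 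You instead expand $\|\hat{\brmT}_k\|_F^2$ directly via the Hilbert--Schmidt inner product, peel off $\hat{\brmA}_k$ with the submultiplicativity bound $\|\brmX\brmY\|_F\leq\|\brmX\|\,\|\brmY\|_F$ and Cauchy--Schwarz, use AM--GM to symmetrize the cross term, and arrive at the genuine two-term linear recursion $M_k\leq(4p_k+2q_k)M_{k-1}+(2q_k+1)M_{k-2}$, which you close by induction. The two decompositions are morally parallel --- the paper's off-diagonal block contributes the same $2\mathbb{E}\|\hat{\brmA}_k\|$ that your AM--GM step does --- but your version avoids the block-matrix lifting and the PSD trace lemma entirely, making it more elementary and making the origin of each summand in $\alpha=4\mathbb{E}\|\hat{\brmA}_k\|^2+2\mathbb{E}\|\hat{\brmA}_k\|+1$ more transparent. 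The paper's argument, in exchange, tracks the joint second moments $\brmB_k$ explicitly, which is structurally cleaner if one wants to push further (e.g.\ to sharper bounds using the full operator norm of the companion matrix rather than its triangle-inequality split). Your closing remark about the delicacy of the cross-term bound is also apt: a naive Cauchy--Schwarz in expectation followed by plugging in the inductive bound on $\sqrt{M_{k-1}M_{k-2}}$ does not close the induction with the stated $\alpha$; the pointwise AM--GM (or, equivalently, a subsequent AM--GM on $\sqrt{M_{k-1}M_{k-2}}$) is what distributes the $\mathbb{E}\|\hat{\brmA}_k\|$ weight correctly across the two lags.
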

\begin{proof}
	Let $t_k = \mathbb{E}[\|\hat{\brmT}_k\|_F^2]$ for $k\geq 0$. For each $k\geq 1$ define
	\[ \brmB_k = \mathbb{E}\begin{bmatrix}\hat{\brmT}_k\hat{\brmT}_{k}^T & \hat{\brmT}_k\hat{\brmT}_{k-1}^T\\
			\hat{\brmT}_{k-1}\hat{\brmT}_k^T & \hat{\brmT}_{k-1}\hat{\brmT}_{k-1}^T\end{bmatrix}\]
	and note that $\trace(\brmB_k) = t_k + t_{k-1}$ for all $k\geq 1$ and
	that $\brmB_k$ is positive semidefinite for all $k\geq 1$. 
	We can write the defining recurrence for $\hat{\brmT}_k$ as
	\[ \begin{bmatrix} \hat{\brmT}_k\\\hat{\brmT}_{k-1}\end{bmatrix} = 
		\begin{bmatrix}2\hat{\brmA}_k & -\brmI\\\brmI & 0\end{bmatrix} 
		\begin{bmatrix}\hat{\brmT}_{k-1}\\\hat{\brmT}_{k-2}\end{bmatrix}\quad\textup{for $k\geq 2$}.\]
	This gives the following recurrence for $\brmB_k$: 
	\begin{align*}
		 \brmB_{k} &  = \mathbb{E}\left[\begin{bmatrix}2\hat{\brmA}_k & -\brmI\\\brmI & 0\end{bmatrix}
				\begin{bmatrix}\hat{\brmT}_{k-1}\hat{\brmT}_{k-1}^T & \hat{\brmT}_{k-1}\hat{\brmT}_{k-2}^T\\
				\hat{\brmT}_{k-2}\hat{\brmT}_{k-1}^T & \hat{\brmT}_{k-2}\hat{\brmT}_{k-2}^T\end{bmatrix}
				\begin{bmatrix}2\hat{\brmA}_k & -\brmI\\\brmI & 0\end{bmatrix}^T\right]\\
			& = \mathbb{E}_{\hat{\brmA}_{k}}\left[
					\begin{bmatrix}2\hat{\brmA}_k & -\brmI\\\brmI & 0\end{bmatrix}\brmB_{k-1}
					\begin{bmatrix}2\hat{\brmA}_k & -\brmI\\\brmI & 0\end{bmatrix}^T\right]
	\end{align*}
	since $\hat{\brmA}_k$ is independent of $\hat{\brmA}_{k-1},\ldots, \hat{\brmA}_1$. 
	Taking the trace of both sides gives
	\begin{align*}
		 t_{k} + t_{k-1} & = \trace\left(\mathbb{E}_{\hat{\brmA}_{k}}
		\left[\begin{bmatrix}2\hat{\brmA}_k & -\brmI\\\brmI & 0\end{bmatrix}\brmB_{k-1}
		\begin{bmatrix}2\hat{\brmA}_k & -\brmI\\\brmI & 0\end{bmatrix}^T\right]\right)\\
		& = \mathbb{E}_{\hat{\brmA}_k}\left[\trace\left(
			\begin{bmatrix}2\hat{\brmA}_k & -\brmI\\\brmI & 0\end{bmatrix}^T
			\begin{bmatrix}2\hat{\brmA}_k & -\brmI\\\brmI & 0\end{bmatrix}\brmB_{k-1}\right)\right]\\
		& \leq \mathbb{E}_{\hat{\brmA}_k}\left[
			\left\|\begin{bmatrix} 4\hat{\brmA}_k^T\hat{\brmA}_k+\brmI & -2\hat{\brmA}_k^T\\
							-2\hat{\brmA}_k & \brmI\end{bmatrix}\right\|\right](t_{k-1}+t_{k-2})\\
		& \leq \mathbb{E}_{\hat{\brmA}_k}\left[
		\left\|\begin{bmatrix} 4\hat{\brmA}_k^T\hat{\brmA}_k + \brmI & \mathbf{0}\\\mathbf{0} & \brmI\end{bmatrix}\right\|
 		+ \left\|\begin{bmatrix} \mathbf{0} & -2\hat{\brmA}_k^T\\-2\hat{\brmA}_k & \mathbf{0}\end{bmatrix}\right\|
			\right](t_{k-1}+t_{k-2})\\
		& \leq (4\mathbb{E}[\|\hat{\brmA}_k\|^2] + 2\mathbb{E}[\|\hat{\brmA}_k\|] + 1)(t_{k-1} + t_{k-2})\\
		& \leq \alpha(t_{k-1} + t_{k-2})
	\end{align*}
	where we have use the inequality $\trace(\brmA\brmB) \leq \|\brmA\|\trace(\brmB)$ 
	for positive semidefinite matrices $\brmA$ and $\brmB$, together with the triangle inequality, and the definition of $\alpha$. 
	We then see that for all $k\geq 2$, 
	\begin{equation}
	\label{eq:tbound} t_{k} \leq t_{k} + t_{k-1} \leq \alpha^{k-1}(t_1 + t_0).
	\end{equation}
	Since $t_0 = \mathbb{E}[\|\brmI\|_F^2] = D$ and 
	\[ t_1 = \mathbb{E}[\|\hat{\brmA}_1\|_F^2] \leq D\mathbb{E}[\|\hat{\brmA}_1\|^2] 
			\leq D(4\mathbb{E}[\|\hat{\brmA}_1\|^2] + 2\mathbb{E}[\|\hat{\brmA}_1\|])\leq D(\alpha - 1)\]
	it follows that $ t_k \leq D\alpha^k$
	for all $k\geq 0$.
\end{proof}

Fortunately, in the situation of interest here, the coefficients weighting each of the Chebyshev terms fall off even more rapidly than the variance increases.  In particular, the von Mises smoothing introduced in Section~\ref{sec:vonmises} has subexponential tails that counteract the explosion of the noisy Chebyshev polynomials.  Thus with independent estimators to form stable unbiased estimates of the weighted Chebyshev polynomials, we can now form an unbiased estimate of~$F(\brmA)$.
\begin{proposition}
\label{prop:fa}
Let~${F:\reals^{D\times D}\to\reals^{D \times D}}$ be an analytic operator function with Chebyshev polynomial coefficients~${\gamma_0, \gamma_1, \ldots}$.  Construct the sequence~${\hat{\brmT}_0, \hat{\brmT}_1, \ldots}$ as in Lemma~\ref{lemma:chebyshev} above.  Let~$\bpi$ be a probability distribution on the natural numbers (including zero)~$\naturals_0$, where~$\pi_k$ is the probability mass function and~${\pi_k > 0}$ if~${\gamma_k \neq 0}$.  Construct an estimator~$\hat{\brmF}$ by first drawing a random nonnegative integer~$k$ from~$\bpi$ and then computing~${\hat{\brmF}=\gamma_k\hat{\brmT}_k/\pi_k}$.  Then ${\mathbb{E}[\hat{\brmF}]=F(\brmA)}$.
\end{proposition}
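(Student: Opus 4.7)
The plan is to compute $\mathbb{E}[\hat{\brmF}]$ by conditioning on the random index $k$ via the tower property. The estimator $\hat{\brmF} = \gamma_k \hat{\brmT}_k/\pi_k$ has two sources of randomness: the random draw $k\sim \bpi$, and the random matrices $\hat{\brmA}_1,\hat{\brmA}_2,\ldots$ that feed into the recursion defining $\hat{\brmT}_k$. The key observation is that these two sources are independent, since $\bpi$ is a fixed proposal distribution with no dependence on the $\hat{\brmA}_i$.

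First I would write
\[ \mathbb{E}[\hat{\brmF}] = \sum_{k=0}^{\infty} \pi_k\,\mathbb{E}\!\left[\frac{\gamma_k \hat{\brmT}_k}{\pi_k}\,\bigg|\,K=k\right] = \sum_{k\,:\,\pi_k>0} \gamma_k\,\mathbb{E}[\hat{\brmT}_k], \]
using independence of $K$ and the $\hat{\brmA}_i$ to drop the conditioning on the inner expectation. Next I would invoke Lemma~\ref{lemma:chebyshev} to replace $\mathbb{E}[\hat{\brmT}_k]$ with $T_k(\brmA)$, which is the main technical input and has already been established. The hypothesis that $\pi_k>0$ whenever $\gamma_k\neq 0$ ensures that the indices where $\pi_k=0$ contribute zero to the series (we adopt the convention $0/0 := 0$, or equivalently note that those indices are never sampled and also carry coefficient zero), so the restricted sum $\sum_{k:\pi_k>0}\gamma_k T_k(\brmA)$ coincides with the full expansion $\sum_{k=0}^{\infty}\gamma_k T_k(\brmA) = F(\brmA)$ given in~\eqref{eqn:matrix-cheb}.

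The only genuine subtlety — and the step I would expect to be the main obstacle — is justifying the interchange of expectation and infinite summation in the first display. This is not automatic: Lemma~\ref{lemma:cheb-variance} already warns that $\mathbb{E}[\|\hat{\brmT}_k\|_F^2]$ can grow geometrically in $k$. The clean route is to note that the statement is really about entrywise convergence of the series $\sum_k \gamma_k T_k(\brmA)$, which holds because $F$ is analytic on a neighborhood of the spectrum of $\brmA$; one then applies Fubini/dominated convergence entrywise using the bound
\[ \sum_{k=0}^{\infty} \pi_k \cdot \frac{|\gamma_k|}{\pi_k}\,\bigl|\mathbb{E}[(\hat{\brmT}_k)_{ij}]\bigr| = \sum_{k=0}^{\infty} |\gamma_k|\,|T_k(\brmA)_{ij}| < \infty, \]
which is finite since $|T_k(\lambda)|\leq 1$ on $[-1,1]$ and the Chebyshev coefficients of an analytic $f$ decay geometrically. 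Once this interchange is legitimized, the remaining manipulation is the one-line importance-sampling cancellation $\pi_k\cdot \gamma_k/\pi_k = \gamma_k$, and the conclusion $\mathbb{E}[\hat{\brmF}] = F(\brmA)$ follows immediately.
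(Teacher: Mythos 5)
Your proof takes the same route as the paper's: condition on the random index $k$ (the tower property, i.e., importance sampling), use the independence of the proposal draw from the $\hat{\brmA}_i$'s, and apply Lemma~\ref{lemma:chebyshev} to obtain $\mathbb{E}[\hat{\brmT}_k] = T_k(\brmA)$, at which point the weights cancel and the Chebyshev series resums to $F(\brmA)$. The paper's proof is a one-liner that does exactly this and does not comment on the summation/expectation interchange at all, so you are right to flag it as the genuine subtlety.

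However, the justification you offer for the interchange is incorrect. Fubini requires
\[
\sum_{k=0}^{\infty} \mathbb{E}\Bigl[\bigl|\mathbf{1}\{K=k\}\tfrac{\gamma_k}{\pi_k}(\hat{\brmT}_k)_{ij}\bigr|\Bigr]
= \sum_{k=0}^{\infty} |\gamma_k|\,\mathbb{E}\bigl[|(\hat{\brmT}_k)_{ij}|\bigr] < \infty,
\]
i.e., the expectation of the absolute value, whereas the quantity you bound is $\sum_k |\gamma_k|\,\bigl|\mathbb{E}[(\hat{\brmT}_k)_{ij}]\bigr|$. By Jensen the latter is a \emph{lower} bound for the former, so its finiteness proves nothing. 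This gap is not cosmetic: you yourself cite Lemma~\ref{lemma:cheb-variance}, which shows $\mathbb{E}[\|\hat{\brmT}_k\|_F^2]$ can grow like $\alpha^k$, so $\mathbb{E}[|(\hat{\brmT}_k)_{ij}|]$ can be exponentially larger than $|T_k(\brmA)_{ij}|\le 1$. The honest condition one needs is something like $\sum_k |\gamma_k|\sqrt{D\alpha^k}<\infty$, which is a real hypothesis on the decay of the $\gamma_k$ relative to the noise level $\alpha$ (and is exactly what the von Mises kernel's subexponential coefficient decay is designed to deliver in Section~\ref{sec:vonmises}). So: same approach and same key lemma as the paper, a correct instinct to worry about the interchange that the paper silently elides, but a flawed bound where you needed $\mathbb{E}[|\cdot|]$, not $|\mathbb{E}[\cdot]|$.
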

\begin{proof}
This simply combines importance sampling with the Chebyshev estimator from Lemma~\ref{lemma:chebyshev}:
\begin{align}
\mathbb{E}[\hat{\brmF}] = \mathbb{E}\left[\frac{\gamma_k\hat{\brmT}_k}{\pi_k}\right] &=
\sum_{k=0}^\infty \pi_k \frac{\gamma_k\mathbb{E}[\hat{\brmT}_k]}{\pi_k}
= \sum_{k=0}^\infty \gamma_k T_{k}(\brmA) = F(\brmA)\,.
\end{align}
\end{proof}

The variance of importance sampling estimates can be poor.  In the following, we derive the proposal distribution that minimizes the variance of the estimator under the Frobenius norm, i.e., it minimizes the expected squared Frobenius norm of the difference between the estimator~$\hat{\brmF}$ and its mean~$F(\brmA)$.
\begin{proposition}
	Suppose the sequence $\gamma_0,\gamma_1,\ldots$ is such that  
\[ Z = \sum_{k'=0}^\infty |\gamma_{k'}|\sqrt{\mathbb{E}_{\hat{\brmT}_{k'}}[\|\hat{\brmT}_{k'}\|^2_F]}\]
is finite. Then the proposal distribution~$\bpi^\star$ where
\begin{align}
  \pi^\star_k &= \frac{1}{Z} |\gamma_k|\sqrt{\mathbb{E}_{\hat{\brmT}_k}[\|\hat{\brmT}_k\|^2_F]},
  \qquad 
\end{align}
minimizes~$\mathbb{E}_{\bpi,\hat{\brmT}}[\|\hat{\brmF}-F(\brmA)\|^2_F\|]$.
\end{proposition}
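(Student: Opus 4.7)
The plan is to express the mean squared Frobenius error as an explicit function of the proposal $\bpi$, then minimize over the simplex on $\naturals_0$ by Cauchy--Schwarz. First, I would invoke the unbiasedness of $\hat{\brmF}$ established in Proposition~\ref{prop:fa} to split
$$\mathbb{E}[\|\hat{\brmF}-F(\brmA)\|_F^2] = \mathbb{E}[\|\hat{\brmF}\|_F^2] - \|F(\brmA)\|_F^2,$$
since the cross term reduces to $2\|F(\brmA)\|_F^2$ via $\mathbb{E}[\hat{\brmF}] = F(\brmA)$. The second term is a constant independent of $\bpi$, so only the first needs to be minimized.

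Next, I would condition on the random index $k\sim \bpi$ (drawn independently of the matrix-valued sequence $\{\hat{\brmT}_j\}_{j\ge 0}$) to obtain
$$\mathbb{E}[\|\hat{\brmF}\|_F^2] = \sum_{k=0}^\infty \pi_k\,\frac{\gamma_k^2}{\pi_k^2}\,\mathbb{E}[\|\hat{\brmT}_k\|_F^2] = \sum_{k=0}^\infty \frac{c_k^2}{\pi_k},$$
where $c_k := |\gamma_k|\sqrt{\mathbb{E}[\|\hat{\brmT}_k\|_F^2]}$, with the convention that any term with $\gamma_k = 0$ contributes zero. I would then minimize $\sum_k c_k^2/\pi_k$ over probability distributions by Cauchy--Schwarz:
$$Z^2 = \Bigl(\sum_k c_k\Bigr)^2 = \Bigl(\sum_k \frac{c_k}{\sqrt{\pi_k}}\cdot\sqrt{\pi_k}\Bigr)^2 \le \Bigl(\sum_k \frac{c_k^2}{\pi_k}\Bigr)\Bigl(\sum_k \pi_k\Bigr) = \sum_k \frac{c_k^2}{\pi_k},$$
with equality iff $\sqrt{\pi_k} \propto c_k/\sqrt{\pi_k}$, i.e.\ $\pi_k \propto c_k$, which after normalization yields exactly $\pi^\star_k = c_k/Z$.

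There is no serious obstacle here; the main subtlety is ensuring that the Cauchy--Schwarz bound is attained by a feasible distribution. This is precisely why the hypothesis $Z < \infty$ is needed: it guarantees that $\pi^\star$ is a bona fide probability mass function and that the minimum value is the finite quantity $Z^2$. Handling of indices with $\gamma_k = 0$ is automatic, since such terms contribute zero to the objective and receive zero mass under $\pi^\star$, consistent with the constraint from Proposition~\ref{prop:fa} that $\pi_k > 0$ whenever $\gamma_k \neq 0$.
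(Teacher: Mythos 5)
Your proof is correct and takes essentially the same approach as the paper: the same bias--variance split via unbiasedness of $\hat{\brmF}$, the same reduction to minimizing $\sum_k c_k^2/\pi_k$, and the same lower bound $Z^2$ (the paper phrases the inequality as Jensen's inequality for $z\mapsto z^2$, which is the same as your Cauchy--Schwarz application here). Your version is slightly tidier in that the Cauchy--Schwarz equality condition gives the optimizer directly, whereas the paper first evaluates the objective at $\bpi^\star$ and then separately shows no other $\bpi$ does better.
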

\begin{proof}
The Frobenius norm can be written as a trace, so that
\begin{align}
\mathbb{E}_{\bpi,\hat{\brmT}}[\|\hat{\brmF} - F(\brmA)\|^2_F] &= \mathbb{E}_{\bpi,\hat{\brmT}}[
\trace((\hat{\brmF}-F(\brmA))^{\trans}(\hat{\brmF}-F(\brmA)))]\\
&= \mathbb{E}_{\bpi,\hat{\brmT}}[\trace(\hat{\brmF}^{\trans}\hat{\brmF})] - \trace((F(\brmA))^{\trans}F(\brmA))\\
&= \mathbb{E}_{\bpi,\hat{\brmT}}[\|\hat{\brmF}\|^2_F] - \|F(\brmA)\|^2_F\,.
\end{align}
The second term is independent of $\bpi$, so it suffices to minimize the first
term. Under $\bpi^\star$ this term can be expanded as
\begin{align}
\mathbb{E}_{\bpi^\star,\hat{\brmT}}[\|\hat{\brmF}\|^2_F] &= \mathbb{E}_{\bpi^\star}\left[
\mathbb{E}_{\hat{\brmT}_k}\left[
\frac{\gamma_k^2}{(\pi^\star_k)^2}\|\hat{\brmT}_k\|^2_F
\right]
\right] = \mathbb{E}_{\bpi^\star}\left[
\frac{\gamma_k^2}{(\pi^\star_k)^2}
\mathbb{E}_{\hat{\brmT}_k}\left[
\|\hat{\brmT}_k\|^2_F
\right]
\right]\\
&= \sum_{k=0}^\infty \frac{\gamma_k^2}{\pi^\star_k}\mathbb{E}_{\hat{\brmT}_k}\left[
\|\hat{\brmT}_k\|^2_F
\right]
= Z \sum_{k=0}^\infty |\gamma_k|\sqrt{\mathbb{E}_{\hat{\brmT}_k}[\|\hat{\brmT}_k\|^2_F]} = Z^2 \,.
\end{align}
To show that this value is minimal, we consider an alternative
distribution~$\bpi$ and apply Jensen's inequality on the
convex function $z \mapsto z^2$ to write
\begin{equation}
Z^2 =  \left(\sum_{k=0}^\infty \pi_k\frac{|\gamma_k|\sqrt{\mathbb{E}_{\hat{\brmT}_k}[\|\hat{\brmT}_k\|^2_F]}}{\pi_k} \right)^2
\leq \sum_{k=0}^\infty \pi_k \frac{\gamma_k^2}{\pi_k^2} \mathbb{E}_{\hat{\brmT}_k}[\|\hat{\brmT}_k\|^2_F]
= \mathbb{E}_{\bpi,\hat{\brmT}}\left[
\|\hat{\brmF}\|^2_F
\right]\,.
\end{equation}
Thus we can see that~$\bpi^\star$ offers expected squared Frobenius norm less than or equal to that expected under all other possible proposals~$\bpi$.
\end{proof}
We also note that an importance sampler can be constructed with a hazard rate formulation, resulting in a ``randomized truncation'' that is still unbiased, using an estimator with~$k\sim\bpi$ as before:
\begin{align}
\hat{\brmF}_{\sf{truncation}} &= \gamma_0\hat{\brmT}_0 + \sum_{j=1}^k \frac{\gamma_j \hat{\brmT}_j}{1 - \sum_{\ell=0}^{j-1}\pi_{\ell}}\,
.\end{align}
Although this is appealing because it makes use of the intermediate steps of the recursive computation of~$\hat{\brmT}_k$, we have not been able to identify hazard rates that generally result in variance reduction.  It nevertheless seems likely that such hazard rates exist.

\subsection{Von Mises Smoothing}
\label{sec:vonmises}
As discussed in Section~\ref{sec:problem}, it is useful to frame the problem in terms of estimating the spectral measure~$\psi(\lambda)$.    This is closely related to the \emph{kernel polynomial method} (see, e.g., \citet{weisse2006kernel}) in which truncated Chebyshev approximations are smoothed with order-dependent Fej\'{e}r, Jackson, or Lorentz kernels to minimize the Gibbs phenomenon in the approximation.  Here we are not making fixed-order truncations, but smoothing is still desirable.  We introduce the \emph{von Mises kernel} to address our situation.  It has support on the interval~$(-1,1)$, integrates to one, and is approximately Gaussian for large~$\kappa$, but with a convenient closed-from Chebyshev expansion: 
\begin{align}
K_{\kappa}(\lambda,\lambda') &= \frac{
e^{\kappa \cos(\cos^{-1}(\lambda) - \cos^{-1}(\lambda'))}
+ e^{\kappa \cos(\cos^{-1}(-\lambda) - \cos^{-1}(\lambda') + \pi)}
}{2\pi I_0(\kappa) \sqrt{1-\lambda^2}}\,,
\end{align}
where~$I_\eta(\kappa)$ is the modified Bessel function of the first kind.  This kernel arises from constructing a von Mises distribution on the unit circle and then projecting it down onto the interval~$(-1,1)$, along with the appropriate Jacobian adjustment.  The parameter~$\kappa$ is inversely related to the squared width of the kernel.  Figure~\ref{fig:von_mises} shows several densities from different parameters, on the unit circle and projected.

\begin{figure}[t]
\centering
\begin{subfigure}[b]{0.45\textwidth}
  \includegraphics[width=\textwidth]{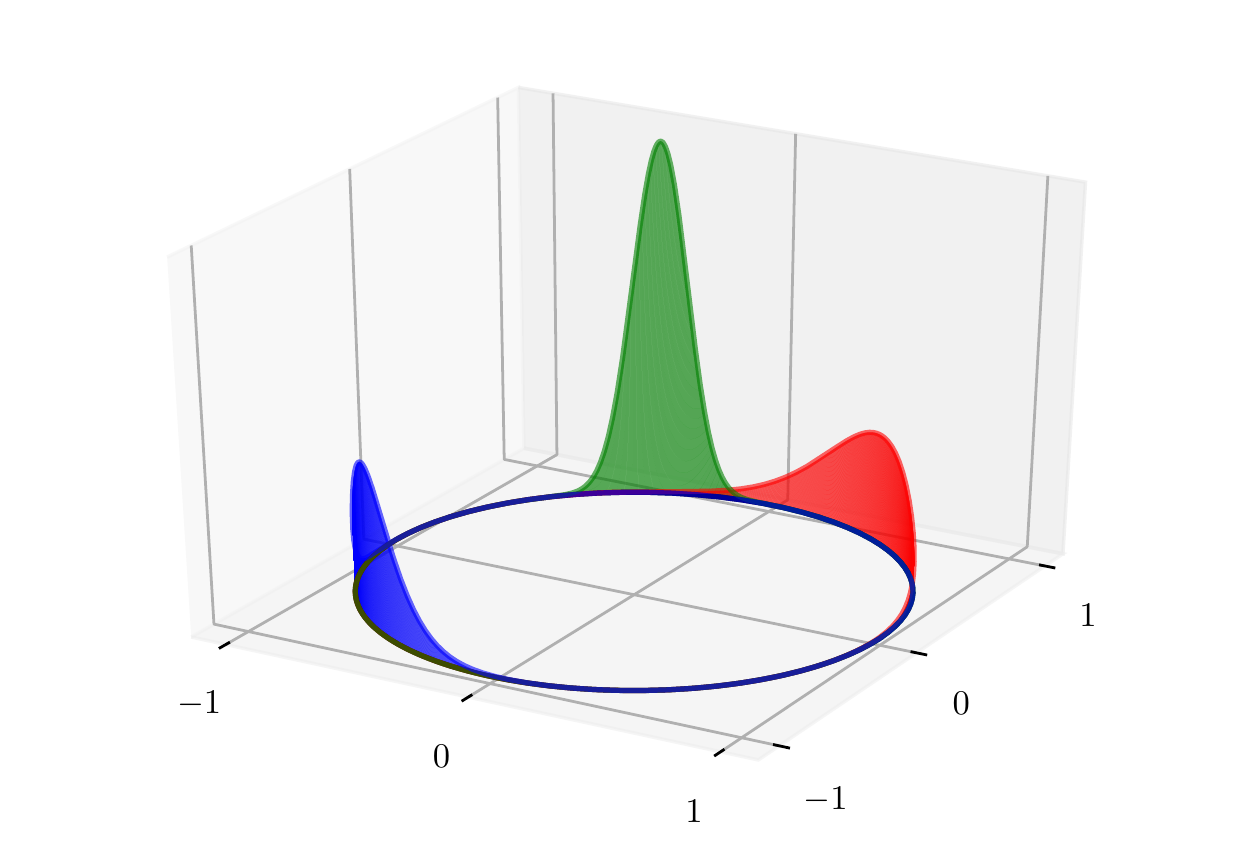}
  \label{fig:von_mises_3d}
  \caption{Von Mises distributions on the unit circle}
\end{subfigure}
\begin{subfigure}[b]{0.45\textwidth}
\includegraphics[width=\textwidth]{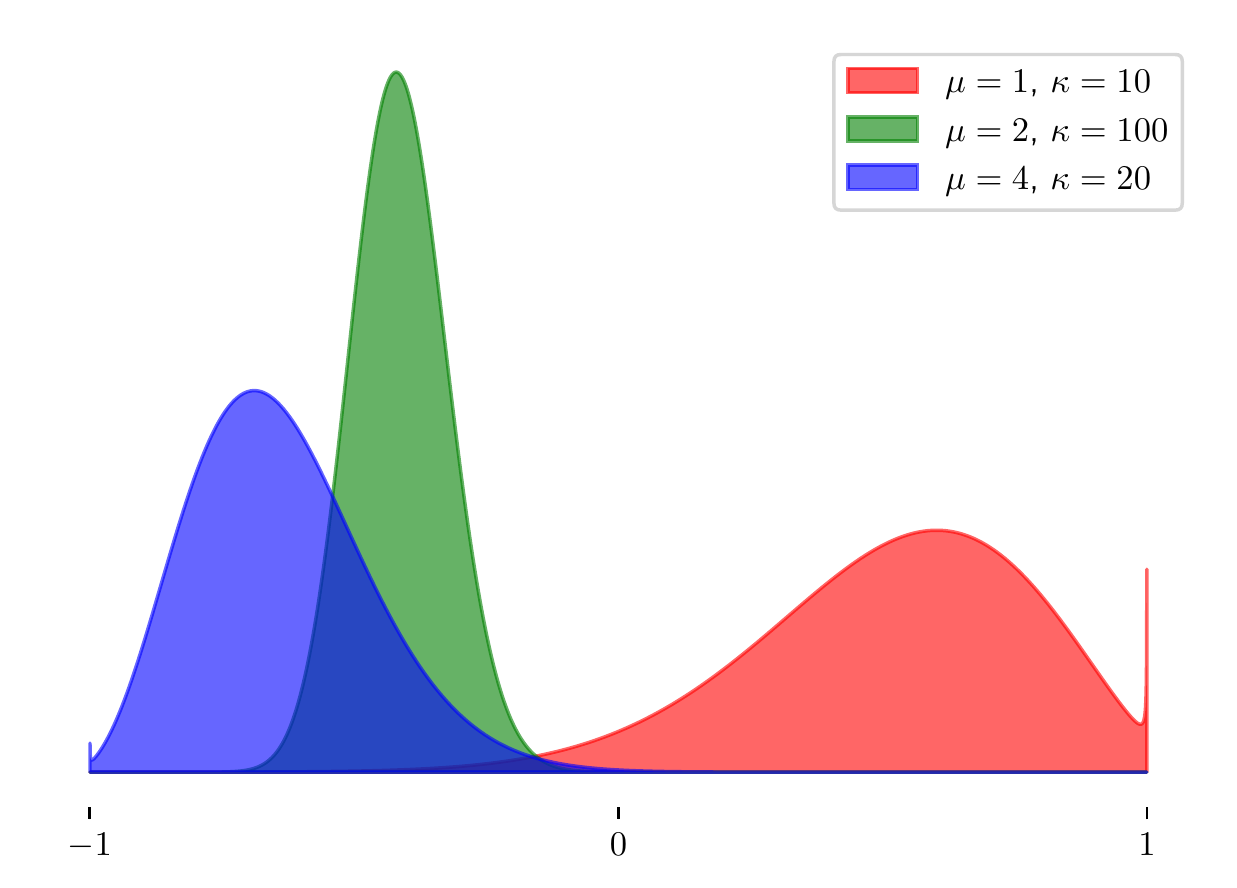}
\label{fig:von_mises_2d}
\caption{Von Mises projected onto $(-1,1)$}
\end{subfigure}
\caption{Three von Mises densities are shown with different parameters.  On the left they are shown ``natively'' on the unit circle.  On the right they have been projected down to~$(-1,1)$.  Note that the measure correction can result in boundary effects such as the large density values near~$-1$ and~$1$.}
\label{fig:von_mises}
\end{figure}

Of particular interest is that the von Mises kernel has a closed-form Chebyshev polynomial expansion for any~$\lambda'$.  Chebyshev polynomials can be constructed in three different ways.  First, on the interval~$(-1,1)$ they can be constructed via the recursion in Section~\ref{sec:problem} or as~${T_k(x) = \cos(k\cos^{-1}(x))}$.  They can also be computed on the unit circle in the complex plane as~${T_k(x) = \frac{1}{2}(z^k + z^{-k})}$ where~${z = \cos^{-1}(x) + i\sin^{-1}(x)}$.  Finally, they can be constructed via the phase alone, as~${|z|=1}$ on the unit circle, i.e.,~${z=e^{i\pi\theta}}$, so that~${T_k(x) = \cos(k \theta)}$.  In the~$z$ and~$\theta$ cases, the polynomials are projected onto the interval~$(-1,1)$.  See \citet{trefethen2013approximation} for a comprehensive treatment of these representations.

We focus only on the phase view.  The von Mises distribution is a probability distribution defined on the unit circle, with density given by
\begin{align}
f(\theta\,|\,\mu,\kappa) &= \frac{e^{\kappa \cos(\theta-\mu)}}{2 \pi I_0(\kappa) }\,.
\end{align}
Its variance is given by~${1-I_1(\kappa)/I_0(\kappa)}$.  For a fixed~$\mu$ and~$\kappa$, the coefficients can be computed via
\begin{align}
\gamma_k &= \frac{2}{\pi} \int^{\pi}_{-\pi} f(\theta\,|\,\mu,\kappa)\,\cos(k\theta)\,d\theta
= \frac{1}{\pi^2 I_0(\kappa)}\int^{\pi}_{-\pi} \exp\{\kappa \cos(\theta-\mu)\}\,\cos(k\theta)\,d\theta\\
&= \frac{2}{\pi}\frac{I_k(\kappa)}{I_0(\kappa)}\cos(k\mu)\,.
\end{align}
The first coefficient for Chebyshev polynomials does not have the factor of two and so~${\gamma_0=1/\pi}$.  Returning to the interval~$(-1,1)$ we have
\begin{align}
\label{eqn:vm-coef0}
\gamma_0(\lambda',\kappa) &= \frac{1}{\pi}\,,\\
\label{eqn:vm-coefk}
\gamma_k(\lambda',\kappa) &= \frac{2}{\pi}\frac{I_k(\kappa)}{I_0(\kappa)}\cos(k\cos^{-1}(\lambda'))\,.
\end{align}
For our purposes here,~${\kappa\approx 5000}$ and so the ratio of modified Bessel functions is well approximated as:
\begin{align}
\frac{I_k(\kappa)}{I_0(\kappa)} &\approx \exp\left\{-\frac{1}{2}\frac{k^2}{\kappa}\right\}\,.
\end{align}
This approximation is computationally useful and also makes it clear that the~$\gamma_k$ rapidly converge to zero as a function of~$k$.

\subsection{Bounding the Variance of the Importance Samples}
\label{sec:bounding}
When computing an importance sampled estimate of the von Mises kernel, a natural choice of proposal distribution is to have it be proportional to the absolute value of the coefficients in Equations~\eqref{eqn:vm-coef0} and~\eqref{eqn:vm-coefk}.  Here the proposal distribution will be denoted as~$\brmq$ rather than~$\bpi$ to distinguish it from the constant~$\pi$:
\begin{align}
\label{eqn:vm-prop0}
\rmq_0(\lambda',\kappa) &= \frac{1}{\mathcal{Z(\lambda',\kappa)}}\,, \\
\label{eqn:vm-propk}
\rmq_k(\lambda',\kappa) &= \frac{2}{\mathcal{Z(\lambda',\kappa)}}\frac{I_k(\kappa)}{I_0(\kappa)}|\cos(k \cos^{-1}(\lambda'))|\,,
\end{align}
where the normalizing constant is given by
\begin{align}
\mathcal{Z(\lambda',\kappa)} &=
1 + \frac{2}{I_0(\kappa)}\sum_{k=1}^\infty I_k(\kappa)|\cos(k \cos^{-1}(\lambda'))|
\,.
\end{align}
Note that the~$1/\pi$ factors are absorbed into~$\mathcal{Z}(\lambda',\kappa)$.  Observing that
\begin{align}
\sum_{k=1}^\infty I_k(\kappa)|\cos(k \cos^{-1}(\lambda'))| &\leq \sum_{k=1}^\infty I_k(\kappa) = \frac{1}{2}(e^\kappa - I_0(\kappa))\,,
\end{align}
we can see that the normalization constant has upper bound
\begin{align}
\mathcal{Z}(\lambda',\kappa) &\leq 1 + \frac{e^\kappa - I_0(\kappa)}{I_0(\kappa)} = \frac{e^\kappa}{I_0(\kappa)} \approx \sqrt{2\pi\kappa}\,.
\end{align}
In the theorem below we use this bound on~$\mathcal{Z}(\lambda',\kappa)$ and the bound on the Chebyshev estimators from Lemma~\ref{lemma:cheb-variance} to provide an upper bound on the expected squared Frobenius norm of the overall importance sampler, as a function of~$\lambda'$ and~$\kappa$. 
\begin{theorem}
The expected squared Frobenius norm of the randomized Chebyshev estimator has upper bound
\begin{align*}
\mathbb{E}\left[ \left\lVert\frac{\gamma_k(\lambda',\kappa)}{\rmq_k(\lambda',\kappa)} \hat{\brmT}_k \right\rVert_F^2 \right] &\leq \frac{D e^\kappa}{\pi^2 I_0(\kappa)}
\left(1 + \frac{2e^\kappa}{I_0(\kappa)}\left(\exp\left\{\alpha\kappa/2\right\}-1\right)\right)\,
\end{align*}
where $4\mathbb{E}[||\hat{\brmA}_k||^2] + 2 \mathbb{E}[||\hat{\brmA}_k||] + 1 \leq \alpha$ for all $k$.
\end{theorem}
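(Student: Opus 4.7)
The plan is straightforward: expand the importance-sampling expectation, simplify it using the explicit forms of $\gamma_k$ and $\rmq_k$, and then bound the remaining modified-Bessel tail.

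First I would write
\[\mathbb{E}\!\left[\left\lVert\frac{\gamma_k(\lambda',\kappa)}{\rmq_k(\lambda',\kappa)}\hat{\brmT}_k\right\rVert_F^2\right] = \sum_{k=0}^\infty \frac{\gamma_k(\lambda',\kappa)^2}{\rmq_k(\lambda',\kappa)}\,\mathbb{E}\bigl[\|\hat{\brmT}_k\|_F^2\bigr],\]
where the outer expectation is taken over both $k\sim\brmq(\lambda',\kappa)$ and the random matrix sequence $\hat{\brmA}_1,\hat{\brmA}_2,\ldots$. Plugging in equations \eqref{eqn:vm-coef0}--\eqref{eqn:vm-propk}, the $k=0$ contribution simplifies to $D\mathcal{Z}(\lambda',\kappa)/\pi^2$, while for $k\geq 1$ the ratio $\gamma_k^2/\rmq_k$ reduces to $(2\mathcal{Z}(\lambda',\kappa)/\pi^2)\,(I_k(\kappa)/I_0(\kappa))\,|\cos(k\cos^{-1}\lambda')|$. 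Applying $|\cos(\cdot)|\leq 1$ together with the variance bound $\mathbb{E}[\|\hat{\brmT}_k\|_F^2]\leq D\alpha^k$ from Lemma~\ref{lemma:cheb-variance} then yields
\[\mathbb{E}\!\left[\left\lVert\frac{\gamma_k(\lambda',\kappa)}{\rmq_k(\lambda',\kappa)}\hat{\brmT}_k\right\rVert_F^2\right] \leq \frac{D\mathcal{Z}(\lambda',\kappa)}{\pi^2}\left(1 + 2\sum_{k=1}^\infty \frac{I_k(\kappa)}{I_0(\kappa)}\alpha^k\right).\]

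Next I would control the Bessel tail. Starting from the series $I_k(\kappa) = \sum_{m\geq 0}(\kappa/2)^{2m+k}/(m!(m+k)!)$, the elementary inequality $(m+k)!\geq m!\,k!$ factors out $1/k!$ and leaves $I_0(\kappa)$, giving $I_k(\kappa) \leq ((\kappa/2)^k/k!)\,I_0(\kappa)$. Combining this with $I_0(\kappa)\leq \cosh\kappa\leq e^\kappa$ and summing a Taylor series produces
\[\sum_{k=1}^\infty \frac{I_k(\kappa)}{I_0(\kappa)}\alpha^k \;\leq\; \frac{e^\kappa}{I_0(\kappa)}\bigl(e^{\alpha\kappa/2}-1\bigr).\]
Finally, substituting the outer bound $\mathcal{Z}(\lambda',\kappa)\leq e^\kappa/I_0(\kappa)$ established immediately before the theorem statement gives the advertised right-hand side.

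The main obstacle is the Bessel-tail estimate. The heuristic $I_k(\kappa)/I_0(\kappa)\approx e^{-k^2/(2\kappa)}$ appearing in Section~\ref{sec:vonmises} is only an approximation, so a rigorous route through the series expansion is needed; the combinatorial identity $(m+k)!\geq m!\,k!$ is precisely the step that collapses the double sum into the clean factor $e^{\alpha\kappa/2}-1$. The mild looseness of replacing $I_0(\kappa)$ by $e^\kappa$ inside the tail bound (rather than keeping $I_0(\kappa)$) is what introduces the factor $2e^\kappa/I_0(\kappa)$ inside the parentheses of the final expression, as opposed to a simpler factor of $2$.
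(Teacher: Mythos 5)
Your proof is correct and follows the paper's argument step for step: law of total expectation, explicit simplification of $\gamma_k^2/\rmq_k$ using the von Mises coefficients and proposal, dropping the cosine, the Frobenius bound $\mathbb{E}[\|\hat{\brmT}_k\|_F^2]\leq D\alpha^k$ from Lemma~\ref{lemma:cheb-variance}, the Bessel-tail estimate $I_k(\kappa)\leq(\kappa/2)^k e^\kappa/k!$, and finally the substitution $\mathcal{Z}(\lambda',\kappa)\leq e^\kappa/I_0(\kappa)$. The only difference is that the paper invokes the Bessel inequality as a standard fact while you derive it from the series representation via $(m+k)!\geq m!\,k!$, and your closing observation is also correct: retaining the tighter $I_k(\kappa)\leq(\kappa/2)^k I_0(\kappa)/k!$ rather than relaxing $I_0(\kappa)$ to $e^\kappa$ would sharpen the inner factor from $2e^\kappa/I_0(\kappa)$ to a plain $2$.
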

\begin{proof}
We first use the law of total expectation and incorporate the bound from Lemma~\ref{lemma:cheb-variance}:
\begin{align}
\mathbb{E}\left[ \left\lVert\frac{\gamma_k(\lambda',\kappa)}{\rmq_k(\lambda',\kappa)} \hat{\brmT}_k \right\rVert_F^2 \right] &= \mathbb{E}_k\left[ 
\left(\frac{\gamma_k(\lambda',\kappa)}{q_k(\lambda',\kappa)}\right)^2
\mathbb{E}_{\hat{\brmT}_k}\left[
\left\lVert \hat{\brmT}_k \right\rVert_F^2
\given k
\right]\right]\\
&\leq D\,\mathbb{E}_k\left[
\left(\frac{\gamma_k(\lambda',\kappa)}{\rmq_k(\lambda',\kappa)}\right)^2\alpha^k
\right]\,.
\end{align}
The~$\gamma_k$ and proposal are chosen to resolve to a simple form:
\begin{align}
\frac{\gamma_k^2(\lambda',\kappa)}{\rmq_k(\lambda',\kappa)} &= \begin{cases}
\displaystyle\frac{\mathcal{Z}(\lambda',\kappa)}{\pi^2} & \text{if $k=0$}\\
\displaystyle 2 \frac{\mathcal{Z}(\lambda',\kappa)}{\pi^2}\frac{
 I_k(\kappa)
}{
I_0(\kappa)
}|\cos(k \cos^{-1}(\lambda'))| & \text{if $k>0$}
\end{cases}\,.
\end{align}
In particular, we get a straightforward bound on this term for~${k>0}$ by discarding the cosine:
\begin{align}
\frac{\gamma_k^2(\lambda',\kappa)}{\rmq_k(\lambda',\kappa)} &\leq 2\frac{\mathcal{Z}(\lambda',\kappa)}{\pi^2}\frac{I_k(\kappa)}{I_0(\kappa)} \qquad\text{when $k>0$}\,.
\end{align}
Substituting this back into the overall bound and using the standard result~${I_k(\kappa) \leq \frac{\kappa^k}{2^k k!}e^\kappa}$,
\begin{align*}
\mathbb{E}\left[ \left\lVert\frac{\gamma_k(\lambda',\kappa)}{\rmq_k(\lambda',\kappa)} \hat{\brmT}_k \right\rVert_F^2 \right] &\leq D\frac{\mathcal{Z}(\lambda',\kappa)}{\pi^2}
\left(
1 + \frac{2}{I_0(\kappa)}
\sum_{k=1}^\infty I_k(\kappa)\alpha^k%\left( 4\mathbb{E}[||\hat{\brmA}||^2] + 2\mathbb{E}[||\hat{\brmA}||]+1\right)^k
\right)\\
&\leq D\frac{\mathcal{Z}(\lambda',\kappa)}{\pi^2}
\left(
1 + \frac{2e^\kappa}{I_0(\kappa)}
\sum_{k=1}^\infty \frac{\kappa^k}{2^k k!}\alpha^k%\left( 4\mathbb{E}[||\hat{\brmA}||^2] + 2\mathbb{E}[||\hat{\brmA}||]+1\right)^k
\right)\\
&= D\frac{\mathcal{Z}(\lambda',\kappa)}{\pi^2}
\left(
1 + \frac{2e^\kappa}{I_0(\kappa)}
\left(
\exp\left\{\alpha \kappa/2\right\}
-1
\right)
\right)\,.
\end{align*}
We can now insert the bound for~$\mathcal{Z}(\lambda',\kappa)$ to get the overall result:
\begin{align*}
\mathbb{E}\left[ \left\lVert\frac{\gamma_k(\lambda',\kappa)}{\rmq_k(\lambda',\kappa)} \hat{\brmT}_k \right\rVert_F^2 \right] &\leq
\frac{D e^\kappa}{\pi^2 I_0(\kappa)}
\left(
1 + \frac{2e^\kappa}{I_0(\kappa)}
\left(
\exp\left\{
\alpha \kappa /2
\right\}
-1
\right)
\right)\,.
\end{align*}
\end{proof}
This bound does not depend on~$k$, but we can see that the variance effectively grows exponentially with~$\kappa$.  This can be seen directly as a bias/variance tradeoff: larger~$\kappa$ corresponds to less smoothing and lower bias with a large variance penalty.

\section{Skilling-Hutchinson Randomized Estimation of Generalized Traces}
\label{sec:traces}
Having constructed an unbiased estimator of analytic operator functions, we now focus on estimation of the trace of such matrices.  The classic randomized estimator described in \citet{hutchinson1989stochastic} and \citet{skilling1989eigenvalues}, is the following, which we will call the \emph{Skilling-Hutchinson trace estimator}:
\begin{proposition}
\label{prop:skilling-hutchinson}
Let~${\brmA \in \reals^{D \times D}}$ be a square matrix and~${\brmx\in\reals^D}$ be a random vector such that~${\mathbb{E}[\brmx\brmx^{\trans}] = \identity}$.  Then~${\mathbb{E}[\brmx^{\trans}\brmA\brmx]=\trace(\brmA)}$.
\end{proposition}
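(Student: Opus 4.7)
The plan is to use the standard trick of rewriting the scalar quadratic form as the trace of a matrix product, then exploit the cyclic property of the trace and the linearity of expectation to pull the expectation inside.

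First, I would observe that $\brmx^{\trans}\brmA\brmx$ is a scalar, so it equals its own trace: $\brmx^{\trans}\brmA\brmx = \trace(\brmx^{\trans}\brmA\brmx)$. Then, applying the cyclic property of the trace, I can rewrite this as $\trace(\brmA\brmx\brmx^{\trans})$. This reduction is the only nontrivial manipulation; everything else is bookkeeping.

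Next, I would take expectations on both sides. Since the trace is a linear operator and $\brmA$ is deterministic, expectation commutes with $\trace(\brmA \cdot)$, yielding
\begin{equation*}
\mathbb{E}[\brmx^{\trans}\brmA\brmx] = \mathbb{E}[\trace(\brmA\brmx\brmx^{\trans})] = \trace(\brmA\,\mathbb{E}[\brmx\brmx^{\trans}]) = \trace(\brmA\identity) = \trace(\brmA),
\end{equation*}
where the third equality uses the hypothesis $\mathbb{E}[\brmx\brmx^{\trans}]=\identity$.

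There is no real obstacle here: the only conceptual point is recognizing that a scalar is its own trace so that the cyclic property becomes available, after which the argument is purely mechanical. The proof does not even require $\brmA$ to be symmetric, which is worth noting since the ambient setting of the paper restricts to symmetric matrices but the estimator itself works in full generality.
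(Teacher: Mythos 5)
Your proof is correct and follows exactly the same route as the paper: write the scalar quadratic form as its own trace, cyclically permute to $\trace(\brmA\brmx\brmx^{\trans})$, then pull the expectation inside by linearity. Your remark that symmetry of $\brmA$ is not needed is accurate and consistent with the paper, which states the proposition for a general square matrix.
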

\begin{proof}
\begin{align*}
  \mathbb{E}[\brmx^{\trans}\brmA\brmx] &= \mathbb{E}[\trace(\brmx^{\trans}\brmA\brmx)] & \text{trace of a scalar}\\
  &= \mathbb{E}[\trace(\brmA\brmx\brmx^{\trans})] & \text{invariance to cyclic permutation}\\
  &= \trace(\mathbb{E}[\brmA\brmx\brmx^{\trans}]) & \text{linearity of trace}\\
  &= \trace(\brmA\mathbb{E}[\brmx\brmx^{\trans}]) & \text{linearity of expectation}\\
  &= \trace(\brmA)
\end{align*}
\end{proof}
The requirement for the random variable~$\brmx$ is fairly weak and is satisfied by, for example, standard normal and Rademacher variates.  The properties of such estimators have been studied in \citet{avron2011randomized}.  Finally, the following theorem makes it clear that this trace estimator can be combined with the randomized Chebyshev expansion to construct an overall unbiased estimate of the generalized trace.
\begin{theorem}
Let~$\hat{\brmF}$ be an unbiased estimate of~$F(\brmA)$ as in Proposition~\ref{prop:fa}.  Then the Skilling-Hutchinson trace estimator by~$\brmx^{\trans}\hat{\brmF}\brmx$ with random~${\brmx\in\reals^D}$ such that~${\mathbb{E}[\brmx\brmx^{\trans}]=\identity}$ (as in Proposition~\ref{prop:skilling-hutchinson}) and~$\brmx$ is independent of~$\hat{\brmF}$, then~${\mathbb{E}[\brmx^{\trans}\hat{\brmF}\brmx]=\trace(F(\brmA))}$.
\end{theorem}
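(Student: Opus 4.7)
The plan is to chain together the two preceding propositions by conditioning on $\hat{\brmF}$. First I would invoke the tower property: since $\brmx$ is independent of $\hat{\brmF}$, the joint expectation factors as
\begin{equation*}
\mathbb{E}[\brmx^{\trans}\hat{\brmF}\brmx] = \mathbb{E}_{\hat{\brmF}}\!\left[\mathbb{E}_{\brmx}[\brmx^{\trans}\hat{\brmF}\brmx \mid \hat{\brmF}]\right].
\end{equation*}
By the assumed independence, conditioned on a realization of $\hat{\brmF}$ the inner expectation is exactly of the Skilling-Hutchinson form, with the (now deterministic) matrix $\hat{\brmF}$ playing the role of $\brmA$. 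Applying Proposition~\ref{prop:skilling-hutchinson} gives $\mathbb{E}_{\brmx}[\brmx^{\trans}\hat{\brmF}\brmx \mid \hat{\brmF}] = \trace(\hat{\brmF})$.

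Next I would push the outer expectation through the trace using its linearity, obtaining $\mathbb{E}_{\hat{\brmF}}[\trace(\hat{\brmF})] = \trace(\mathbb{E}[\hat{\brmF}])$. Finally, Proposition~\ref{prop:fa} states that $\hat{\brmF}$ is an unbiased estimator of $F(\brmA)$, so $\mathbb{E}[\hat{\brmF}] = F(\brmA)$ and therefore $\mathbb{E}[\brmx^{\trans}\hat{\brmF}\brmx] = \trace(F(\brmA))$, as required.

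There is essentially no substantive obstacle here: all the work was already done in Propositions~\ref{prop:skilling-hutchinson} and~\ref{prop:fa}. The only care needed is to make sure the conditioning step is legitimate, i.e., that $\brmx$ is independent of all the internal randomness defining $\hat{\brmF}$ (the proposal index $k \sim \bpi$ and the independent matrix estimates $\hat{\brmA}_1, \hat{\brmA}_2, \ldots$). This independence is precisely what the hypothesis stipulates, so the Fubini/tower step is justified and the two-line argument above suffices.
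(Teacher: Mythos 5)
Your proof is correct and is essentially the same argument as the paper's, differing only in presentation: the paper first pushes the trace inward to reach $\trace(\mathbb{E}[\hat{\brmF}\brmx\brmx^{\trans}])$ and then factors the expectation by independence, whereas you condition on $\hat{\brmF}$ first and then apply the Skilling--Hutchinson identity pointwise. Both variants rely on exactly the same ingredients (independence of $\brmx$ and $\hat{\brmF}$, linearity and cyclicity of the trace, and unbiasedness of $\hat{\brmF}$), so the approaches are not meaningfully different.
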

\begin{proof}
Following the simple argument from the proof of Proposition~\ref{prop:skilling-hutchinson}, we have
\begin{align}
\mathbb{E}[\brmx^{\trans}\hat{\brmF}\brmx]=\trace(\mathbb{E}[\hat{\brmF}\brmx\brmx^{\trans}])\,.
\end{align}
Since~$\hat{\brmF}$ is independent of~$\brmx\brmx^{\trans}$, then ${\mathbb{E}[\hat{\brmF}\brmx\brmx^{\trans}]=\mathbb{E}[\hat{\brmF}]\mathbb{E}[\brmx\brmx^{\trans}]=F(\brmA)}$.
\end{proof}
The resulting basic algorithm that combines randomized Chebyshev polynomials, von Mises smoothing, and the Skilling-Hutchinson trace estimator is shown in Algorithm~\ref{alg:basic}. For simplicity, the pseudocode shows a single sample for both the polynomial and the quadratic form.

\begin{algorithm}[t]
\caption{Spectral Density Estimation (Without Control Variates)}
\begin{algorithmic}[1]
\Ensure Smoothing parameter~$\kappa$, query location~$\lambda$, sequence~$\hat{\brmA}_1, \hat{\brmA}_2, \ldots$
\Require Unbiased estimate of the $\kappa$-smoothed spectral density $\tilde{\psi(\lambda)}$
\State $\bpi \gets \left[ |\gamma_0(\lambda,\kappa)|, |\gamma_1(\lambda, \kappa)|, |\gamma_2(\lambda,\kappa)|, \ldots \right] / \sum_{k=0}^\infty |\gamma_k(\lambda, \kappa)|$
\State $k \sim \bpi$
\State $\brmx \sim \distNorm(0, \identity)$
\State $\hat{\brmf}_{-2} \gets \brmx$
\State $\hat{\brmf}_{-1} \gets \hat{\brmA}_1\brmx$ 
\For{$k'\gets 2$ to $k$}
\State $\hat{\brmf} \gets 2\hat{\brmA}_{k'}\hat{\brmf}_{-1} - \hat{\brmf}_{-2}$
\State $\hat{\brmf}_{-2} \gets \hat{\brmf}_{-1}$
\State $\hat{\brmf}_{-1} \gets \hat{\brmf}$
\EndFor
\State \Return $\frac{\gamma_k(\lambda,\kappa)}{\pi_k}\brmx^{\trans}\hat{\brmf}$
\end{algorithmic}
\label{alg:basic}
\end{algorithm}

\subsection{Variance Reduction Through a Control Variate}
The Skilling-Hutchinson trace estimator's variance can be reduced via the construction of a control variate.  Let~${\brmx\sim\distNorm(0,\identity)}$ and form the Skilling-Hutchinson trace estimator as~${\hat{Z}=\brmx^{\trans}\brmA\brmx}$.  A natural choice of control variate is to compute the quadratic form using a second matrix~$\brmB$ which has a known trace.  This is similar to a preconditioner in which one introduces a second related matrix that is somehow easier to deal with than the original.  In this case, the new estimator is
\begin{align}
\hat{Z}' &= \brmx^{\trans}\brmA\brmx - c\cdot (\brmx^{\trans}\brmB\brmx - \trace(\brmB))\,,
  \label{eq:control-variate}
\end{align}
for an appropriately chosen~$c$.  Note that the expectation is preserved, i.e., ${\mathbb{E}[\hat{Z}']=\mathbb[\hat{Z}]=\trace(\brmA)}$.  To compute the variance-minimizing~$c$, we require the following result from \citet{avron2011randomized}:
\begin{proposition}
  The variance of the Skilling-Hutchinson trace estimator applied to a real
  symmetric~$\brmA$ is
  \begin{equation}
    \variance[ \brmx^\trans \brmA \brmx ] = \sum_{i,j,k,l} \rmA_{ij} \rmA_{kl} (\expect[x_i x_j x_k x_l] - \expect[x_i x_j] \expect[x_k x_l])\,,
  \end{equation}
  and in particular when $\brmx \sim \distNorm(0, I)$ we have
  \begin{equation}
    \variance[ \brmx^\trans \brmA \brmx ] = 2\,\trace(\brmA^2) = 2 \| \brmA \|^2_\text{F}\,.
  \end{equation}
\end{proposition}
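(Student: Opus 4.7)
The plan is to prove the two claims in sequence, with the first being a direct expansion of the variance and the second following by computing fourth moments of a standard Gaussian via Isserlis' theorem.

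First I would expand the quadratic form as $\brmx^\trans \brmA \brmx = \sum_{i,j} \rmA_{ij} x_i x_j$. Squaring gives $(\brmx^\trans\brmA\brmx)^2 = \sum_{i,j,k,l}\rmA_{ij}\rmA_{kl}x_ix_jx_kx_l$, and by linearity of expectation,
\begin{equation*}
\mathbb{E}[(\brmx^\trans\brmA\brmx)^2] = \sum_{i,j,k,l}\rmA_{ij}\rmA_{kl}\,\mathbb{E}[x_ix_jx_kx_l].
\end{equation*}
A similar expansion of $(\mathbb{E}[\brmx^\trans\brmA\brmx])^2 = \sum_{i,j,k,l}\rmA_{ij}\rmA_{kl}\,\mathbb{E}[x_ix_j]\mathbb{E}[x_kx_l]$ and subtracting gives exactly the first stated formula. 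This step is purely formal bookkeeping.

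For the Gaussian case, I would invoke Isserlis' theorem, which for $\brmx\sim\distNorm(0,\identity)$ gives $\mathbb{E}[x_ix_jx_kx_l] = \delta_{ij}\delta_{kl} + \delta_{ik}\delta_{jl} + \delta_{il}\delta_{jk}$, together with $\mathbb{E}[x_ix_j]\mathbb{E}[x_kx_l] = \delta_{ij}\delta_{kl}$. Substituting these into the variance formula above, the $\delta_{ij}\delta_{kl}$ terms cancel and we are left with
\begin{equation*}
\variance[\brmx^\trans\brmA\brmx] = \sum_{i,j,k,l}\rmA_{ij}\rmA_{kl}(\delta_{ik}\delta_{jl} + \delta_{il}\delta_{jk}) = \sum_{i,j}\rmA_{ij}^2 + \sum_{i,j}\rmA_{ij}\rmA_{ji}.
\end{equation*}
Using symmetry $\rmA_{ij}=\rmA_{ji}$, both sums equal $\|\brmA\|_F^2$, and one also recognizes $\sum_{i,j}\rmA_{ij}\rmA_{ji} = \trace(\brmA^2)$, so the total is $2\trace(\brmA^2) = 2\|\brmA\|_F^2$.

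There is no real obstacle here; the only step requiring any care is the correct application of Isserlis' theorem (three Wick pairings) and tracking which pairings contribute after subtracting the squared mean. The symmetry assumption on $\brmA$ is what makes the two surviving terms coincide, so I would make sure to state it explicitly where invoked.
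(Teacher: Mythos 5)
Your proof is correct and follows essentially the same route as the paper's primary argument: expand the quadratic form over indices, apply Isserlis/Wick for the Gaussian fourth moment, cancel the $\delta_{ij}\delta_{kl}$ term, and use symmetry to collapse the two surviving pairings into $2\,\trace(\brmA^2)$. The paper additionally records a second, coordinate-free derivation of the Gaussian case (diagonalizing $\brmA$ and using rotational invariance of $\brmx$ so that $\brmy=\brmU\brmx$ is again spherical Gaussian, reducing the variance to a sum of $\chi^2_1$ variances), but that is offered only as an alternative and your omission of it is not a gap.
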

\begin{proof}
  The general expression for the variance follows from
  \begin{align}
    \variance[ \brmx^\trans \brmA \brmx ]
    &= \sum_{i,j,k,l} \expect[ (x_i \rmA_{ij} x_j)(x_k \rmA_{kl} x_l) ]
       - \expect[ x_i \rmA_{ij} x_j ] \expect[x_k \rmA_{kl} x_l ]
    \\
    &= \sum_{i,j,k,l} \rmA_{ij} \rmA_{kl} (\expect[ x_i x_j x_k x_l ] - \expect[x_i x_j] \expect[x_k x_l])\,.
  \end{align}
  When $\brmx \sim \distNorm(0, I)$, we can use the formulas for the second and fourth moments,
  \begin{equation}
    \expect[ x_i x_j ]
    = \delta_{ij}\,,
    \qquad
    \expect[x_i x_j x_k x_l]
    = \delta_{ij} \delta_{kl} + \delta_{ik} \delta_{jl} + \delta_{il} \delta_{jk}\,,
    \qquad
    \delta_{ij}
    = \begin{cases}
      1 & i = j \\
      0 & i \neq j \\
    \end{cases}
    \,,
  \end{equation}
  resulting in the simplification
  \begin{align}
    \variance[ \brmx^\trans \brmA \brmx ]
    &= \sum_{i,j,k,l} \rmA_{ij} \rmA_{kl} ( \delta_{ik} \delta_{jl} + \delta_{il} \delta_{jk} )
    = \bigg( \sum_{j,k,l} \rmA_{kj} \rmA_{kl} \delta_{jl} \bigg) + \bigg( \sum_{i,j,l} \rmA_{ij} \rmA_{jl} \delta_{il} \bigg)\\
    &= 2 \, \trace(\brmA^2)\,.
  \end{align}

  An alternative proof of the identity for the Gaussian case,
  from~\citet{avron2011randomized}, uses the rotational symmetry of the
  Gaussian.
  Decompose the matrix into~${\brmA=\brmU^{\trans}\bLambda\brmU}$, where~$\brmU$ is orthogonal and~$\bLambda$ is diagonal.  As~$\brmx$ is a spherical Gaussian,~${\brmy=\brmU\brmx}$ is also marginally spherical Gaussian.  Thus
  \begin{align}
  \mathbb{V}[\brmx^{\trans}\brmA\brmx] &= \mathbb{V}[\brmy^{\trans}\bLambda\brmy]
  = \mathbb{V}\left[\sum_{d=1}^D y_d^2 \lambda_d(\brmA)\right]
  = \sum_{d=1}^D\mathbb{V}[ y_d^2 \lambda_d(\brmA) ]
  = \sum_{d=1}^D\mathbb{V}[ y_d^2 ] \lambda^2_d(\brmA)\,.
  \end{align}
  The random variates~$y_d^2$ are independently~$\chi^2$-distributed with one degree of freedom and so have variance~2.  The variance of the overall estimator is then given by twice the sum of the squared eigenvalues, which is equal to twice the squared Frobenius norm of~$\brmA$:
  \begin{align}
  \mathbb{V}[\brmx^{\trans}\brmA\brmx] &= 2\sum_{d=1}^D \lambda^2_d(\brmA)
  = 2\,\trace(\brmA^2) = 2\,\|\brmA\|^2_F\,.
  \end{align}
\end{proof}

\begin{algorithm}[t]
\caption{Spectral Density Estimation (With Control Variate)}
\begin{algorithmic}[1]
\Ensure Smoothing param~$\kappa$, query location~$\lambda$, sequence~$\hat{\brmA}_1, \hat{\brmA}_2, \ldots$, matrix~$\brmB$, weighting~$c$
\Require Unbiased estimate of the $\kappa$-smoothed spectral density $\tilde{\psi(\lambda)}$
\State $\bpi \gets \left[ |\gamma_0(\lambda,\kappa)|, |\gamma_1(\lambda, \kappa)|, |\gamma_2(\lambda,\kappa)|, \ldots \right] / \sum_{k=0}^\infty |\gamma_k(\lambda, \kappa)|$
\State $k \sim \bpi$
\State $\brmx \sim \distNorm(0, \identity)$
\State $\hat{\brmf}_{-2} \gets \brmx$
\State $\hat{\brmf}_{-1} \gets \hat{\brmA}_1\brmx$ 
\For{$k'\gets 2$ to $k$}
\State $\hat{\brmf} \gets 2\hat{\brmA}_{k'}\hat{\brmf}_{-1} - \hat{\brmf}_{-2}$
\State $\hat{\brmf}_{-2} \gets \hat{\brmf}_{-1}$
\State $\hat{\brmf}_{-1} \gets \hat{\brmf}$
\EndFor
\State \Return $\frac{\gamma_k(\lambda,\kappa)}{\pi_k}\left(\brmx^{\trans}\hat{\brmf}
- c \cdot \left(\brmx^{\trans}\brmB\brmx - \trace(\brmB)\right)
\right)$
\end{algorithmic}
\label{alg:cv}
\end{algorithm}

\begin{lemma}
When~${\brmx\sim\distNorm(0,\identity)}$, the variance-minimizing weighting~$c^\star$ of the control variate in the expression for~$\hat{Z}'$ in Equation~\eqref{eq:control-variate} is~$\trace({\brmA\brmB})/\trace({\brmB^2})$.
\end{lemma}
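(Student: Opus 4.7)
The plan is to treat this as a textbook control variate problem. Writing $U = \brmx^{\trans}\brmA\brmx$ and $V = \brmx^{\trans}\brmB\brmx - \trace(\brmB)$ (so that $\mathbb{E}[V] = 0$), we have $\hat{Z}' = U - cV$ and therefore
\begin{align*}
\mathbb{V}[\hat{Z}'] = \mathbb{V}[U] - 2c\,\mathrm{Cov}(U,V) + c^2\,\mathbb{V}[V].
\end{align*}
This is a convex quadratic in $c$, so the unique minimizer is $c^\star = \mathrm{Cov}(U,V)/\mathbb{V}[V]$. The denominator is handled immediately by the previous proposition applied to $\brmB$: $\mathbb{V}[V] = \mathbb{V}[\brmx^{\trans}\brmB\brmx] = 2\trace(\brmB^2)$. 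Thus all that remains is to show $\mathrm{Cov}(U,V) = 2\trace(\brmA\brmB)$, since shifting $V$ by the constant $\trace(\brmB)$ leaves the covariance unchanged.

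For the covariance, I would expand in coordinates and apply Isserlis' theorem exactly as in the variance computation that precedes this lemma. Specifically,
\begin{align*}
\mathrm{Cov}(\brmx^{\trans}\brmA\brmx,\brmx^{\trans}\brmB\brmx)
= \sum_{i,j,k,l} \rmA_{ij}\rmB_{kl}\bigl(\mathbb{E}[x_i x_j x_k x_l] - \mathbb{E}[x_i x_j]\mathbb{E}[x_k x_l]\bigr).
\end{align*}
Using $\mathbb{E}[x_i x_j x_k x_l] = \delta_{ij}\delta_{kl} + \delta_{ik}\delta_{jl} + \delta_{il}\delta_{jk}$ and $\mathbb{E}[x_i x_j] = \delta_{ij}$, the $\delta_{ij}\delta_{kl}$ contribution cancels exactly against $\mathbb{E}[x_i x_j]\mathbb{E}[x_k x_l]$, leaving the two Wick cross-contractions. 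These collapse to $\sum_{i,j}(\rmA_{ij}\rmB_{ij} + \rmA_{ij}\rmB_{ji}) = \trace(\brmA\brmB^{\trans}) + \trace(\brmA\brmB)$, which equals $2\trace(\brmA\brmB)$ by symmetry of $\brmA$ and $\brmB$.

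Substituting into the formula for $c^\star$ gives
\begin{align*}
c^\star = \frac{2\,\trace(\brmA\brmB)}{2\,\trace(\brmB^2)} = \frac{\trace(\brmA\brmB)}{\trace(\brmB^2)},
\end{align*}
which is the claimed identity. There is no real obstacle here; the only subtlety is bookkeeping with the Isserlis expansion and being explicit about using symmetry of both $\brmA$ and $\brmB$ to merge the two cross-contraction terms. Alternatively, one could avoid coordinates entirely by simultaneously diagonalizing via the rotational argument used in the second proof of the preceding proposition, but that would require commutativity of $\brmA$ and $\brmB$, whereas the direct Isserlis calculation works with no such restriction.
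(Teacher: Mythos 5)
Your proof is correct and follows essentially the same route as the paper's: apply the classical control variate minimizer $c^\star = \mathrm{Cov}(U,V)/\mathbb{V}[V]$, read off the denominator $2\trace(\brmB^2)$ from the preceding proposition, and reduce the numerator to $2\trace(\brmA\brmB)$ via Gaussian fourth moments. The only cosmetic difference is that the paper computes the covariance by inserting the matrix-form fourth-moment identity $\mathbb{E}[\brmx\brmx^{\trans}\brmA\brmx\brmx^{\trans}] = 2\brmA + \trace(\brmA)\identity$ inside a trace, while you expand coordinate-wise via Isserlis; these are the same calculation in different notation, and your closing remark about why the rotational-symmetry shortcut does not generalize to the two-matrix case is accurate.
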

\begin{proof}
For a estimator~$\hat{Z}'$ formed from~$\hat{Z}$ and a zero-mean control variate~$\hat{W}$, a classic result from Monte Carlo is that the variance-minimizing weighting is given by~${c^\star = \mathbb{C}[\hat{Z},\hat{W}]/\mathbb{V}[\hat{W}]}$, where~$\mathbb{C}[\hat{Z},\hat{W}]$ is the covariance between the original estimator and the control-variate.  In the specific case of interest here, then:
\begin{align}
c^\star &= \frac{\mathbb{C}(\brmx^{\trans}\brmA\brmx,\brmx^{\trans}\brmB\brmx - \trace(\brmB))}{\mathbb{V}(\brmx^{\trans}\brmB\brmx - \trace(\brmB))}\,,
\end{align}
where~$\mathbb{C}(\cdot,\cdot)$ is the covariance between the estimator and the control variate under the distribution induced by~$\brmx$.  The denominator is~$2\,\trace(\brmB^2)$ using the previous lemma.  The covariance in the numerator can be computed via
\begin{align}
\mathbb{C}(\brmx^{\trans}\brmA\brmx,\brmx^{\trans}\brmB\brmx - \trace(\brmB))
&= \mathbb{E}[\brmx^{\trans}\brmA\brmx(\brmx^{\trans}\brmB\brmx - \trace(\brmB))]
- \mathbb{E}[\brmx^{\trans}\brmA\brmx]\mathbb{E}[\brmx^{\trans}\brmB\brmx - \trace(\brmB)]\\
&= \mathbb{E}[\brmx^{\trans}\brmA\brmx(\brmx^{\trans}\brmB\brmx - \trace(\brmB))]\\
&= \mathbb{E}[\brmx^{\trans}\brmA\brmx\brmx^{\trans}\brmB\brmx] - \trace(\brmA)\trace(\brmB)\\
&= \trace(\mathbb{E}[\brmx\brmx^{\trans}\brmA\brmx\brmx^{\trans}]\brmB) - \trace(\brmA)\trace(\brmB)\\
&= \trace((2\brmA + \trace(\brmA)\identity)\brmB) - \trace(\brmA)\trace(\brmB)\\
&= 2\,\trace(\brmA\brmB)\,.
\end{align}
\end{proof}
For a given~$\brmB$ and weighting~$c$, some algebra shows that the variance is reduced by
\begin{align}
\mathbb{V}[\brmx^{\trans}\brmA\brmx] - \mathbb{V}[\brmx^{\trans}\brmA\brmx - c \cdot (\brmx^{\trans}\brmB\brmx - \trace(\brmB))]
%&= 2\,c\cdot \mathbb{C}[\brmx^{\trans}\brmA\brmx, \brmx^{\trans}\brmB\brmx - \trace(\brmB)]
%- c^2\cdot \mathbb{V}[\brmx^{\trans}\brmB\brmx - \trace(\brmB)]\\
&= 4\,c\cdot\trace(\brmA\brmB) - 2\,c^2\cdot\trace(\brmB^2)\,.
\end{align}
When~$c$ is chosen optimally, the control variate reduces the variance by~$2\,\trace(\brmA\brmB)^2/\trace(\brmB^2)$.  This is maximized in the trivial case when~${\brmB=\brmA}$.  The first case of practical interest is~${\brmB=\alpha\identity}$ for scalar~${\alpha\neq 0}$.  This reduces the variance by an amount that does not depend on~$\alpha$, as the optimal~$c$ scales the control variate appropriately.  The second case of practical interest is where~$\brmB$ is a diagonal matrix and, unsurprisingly, the variance minimizing diagonal matrix is the one that is the diagonal of~$\brmA$.  If the Skilling-Hutchinson trace estimator is used sequentially, then the raw diagonal estimator sitting beneath it can be used to form a diagonal~$\brmB$, i.e.,~${\mathbb{E}[\brmx^{\trans}\brmA \odot \brmx]=\text{diag}(\brmA)}$ where~$\odot$ is element-wise product \citep{bekas2007estimator}.  That is, the diagonal estimate from previous estimations can be used to form the control variate for later estimations.  We also note that the control variate described above can be easily adapted for variance reduction in the diagonal estimator.  When estimating~$\brmB$ to be the diagonal of~$\brmA$, we set~${c=1}$.  Pseudocode is provided in Algorithm~\ref{alg:cv}.

\section{Empirical Evaluation of Estimator Performance}
In this section we evaluate the performance of the estimator and demonstrate the feasibility of approximating the spectra of large matrices. We focus on two classes of matrices: sparse adjacency matrices of certain undirected graphs known as Kneser graphs, and dense random matrices designed to model the Hessian matrices often appearing in machine learning problems.
\subsection{Kneser Graphs}
\label{sec:kneser}

\begin{figure}[t]
\centering
\includegraphics[width=\textwidth]{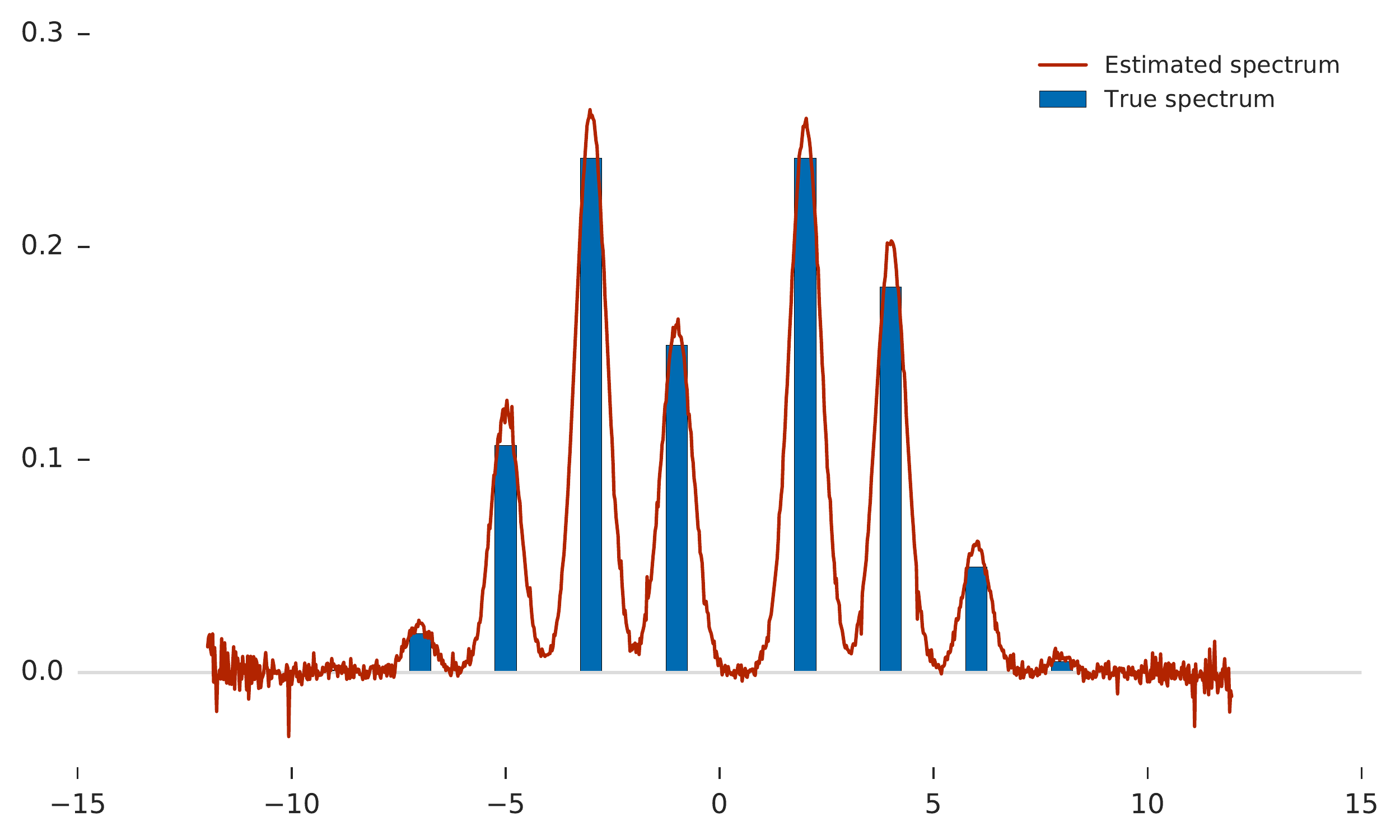}
\caption{The estimated and true spectrum of the Kneser graph $K(23, 11)$. This estimate was generated with $\kappa = 1000$.}
\label{fig:kneser}
\end{figure}

Graph theory provides a natural source of large matrices with well-understood spectra. The \emph{Kneser Graph}, $K(n, k)$ is constructed from a set $S$ with $|S| = n$. The vertices of $K(n, k)$ are identified with the subsets of $S$ of size $k$:
$$V = \{R: R\subset S, |R| = k\}$$
Two vertices are adjacent if they are disjoint:
$$E = \{(R, R'): R\cap R' = \varnothing\}$$
The graph $K(n, k)$ is regular, and each vertex has degree $\binom{n - k}{k}$. The eigenvalues of $K(n, k)$ are 
$$\lambda_i = (-1)^i\binom{n - k - i}{k - i}$$
with associated multiplicities
$$m_i = \binom{n}{i} - \binom{n}{i - 1}$$
for $i = 0,...,k$. For a more detailed discussion of Kneser graphs, see \citet{godsil2001algebraic}.

Choosing $n = 23$ and $k = 11$, we get a regular graph with $1352078$ and $16224936$ edges and degree $12$. This graph has the following eigenvalues, with associated multiplicities:

\begin{center}
\begin{tabular}{ c c }
 Eigenvalue & Multiplicity \\ 
 -11 & 22 \\
-9 & 1518 \\
-7 & 24794 \\
-5 & 144210 \\
-3 & 326876 \\
-1 & 208012 \\
2 & 326876 \\
4 & 245157 \\
6 & 67298 \\
8 & 7084 \\
10 & 230 \\
12 & 1 \\
\end{tabular}
\end{center}

To better simulate the situation where we have an unbiased estimator with non-zero variance, we added Gaussian noise with variance $1 / d^2$ to each non-zero entry of the adjacency matrix for $K(23, 11)$. This noise was added independently each time matrix multiplication was applied. The smoothed spectral density was estimated as described in Algorithm~\ref{alg:basic}. The one exception is that, for each random vector $\brmx$, we select 10000 indices $k\sim\bpi$. We obtained excellent agreement between our predictions and the known eigenspectrum of $K(23, 11)$, as shown in Figure~\ref{fig:kneser}.

\subsection{Random Matrix Theory}
A random matrix $\brmA$ is a matrix whose entries $\rmA_{ij}$ are random variables. The distribution of the~$\rmA_{ij}$ determines the type of random matrix ensemble to which $\brmA$ belongs. We will focus on two ensembles: the real \emph{Wigner} ensemble for which $\brmA$ is symmetric but otherwise the $\rmA_{ij}$ are independent with mean zero; and the real \emph{Wishart} ensemble for which $\brmA=\brmB\brmB^{\trans}$, where the columns of $\brmB$ are independent and normally distributed with mean zero. We will also focus on the \emph{limiting spectral density} of $\brmA$, i.e., the large $D$ limit of the empirical spectral distribution defined in Equation.~\ref{eqn:discrete-density},
\begin{equation}
\psi_\infty(\lambda)  \equiv \lim_{D\to\infty} \psi_D(\lambda)  = \lim_{D\to\infty}  \frac{1}{D} \sum_{d=1}^D \delta({\lambda - \lambda_d(\brmA)})\,.
\end{equation}
Here we attach the~$D$ subscript to~$\psi$ to indicate the size of the matrix when it is relevant, with~$\psi_\infty(\lambda)$ being the limiting spectral density.
For the random matrix ensembles considered here, and under mild conditions (such as bounded fourth moments), $\psi_D(\lambda)$ converges almost surely to $\psi_\infty(\lambda)$.

Throughout this section, validation calculations were performed with noisy versions of the Wishart and Wigner ensembles. Multiplicative Gaussian noise with variance $100 / d^2$ was applied independently at each matrix multiplication step, where $d$ is the dimension of the matrix in question. As in Subsection~\ref{sec:kneser}, spectral density was estimated as described in Algorithm~\ref{alg:basic}. This time, we select 100000 indices $k\sim\bpi$
for each of 2048 random vectors $\brmx$.

\subsubsection{Wigner Ensemble}
For concreteness, we consider a real Wigner matrix $\brmA$ whose elements are normally distributed,
\begin{eqnarray}
\rmA_{ij} \sim
	\begin{cases}
		\mathcal{N}(0,\frac{1}{4D}) & \text{if} \  i < j\\
		\mathcal{N}(0,\frac{1}{2D}) & \text{if} \ i = j \\
		\rmA_{ji} & \text{otherwise}
	\end{cases}\, .
\end{eqnarray}
The limiting spectral density of a random matrix of this type was first computed by~\citet{Wigner}. The result is the celebrated semi-circle law,
\begin{equation}
\label{eqn:rho_sc}
\psi_\infty(\lambda) = \psi_{\sf{sc}}(\lambda) = \begin{cases}
  \frac{2}{\pi}\sqrt{1 - \lambda^2} & \text{if}\ |\lambda| \le 1 \\
  0 & \text{otherwise}
 \end{cases}\,.
\end{equation}
Because $\psi_{\sf{sc}}$ is an even function, its odd moments vanish. The even moments are related to the Catalan numbers, $C_k$,
\begin{equation}
\begin{split}
m_{2k}  &= \lim_{D\to\infty}\frac{1}{D}\trace(\brmA^{2k}) = \int_{-\infty}^{\infty} \lambda^{2k} \, \psi_{\sf{sc}}(\lambda) \, d\lambda \\
&= 4^{-k}\,C_{k} = 4^{-k} \frac{1}{k+1}\binom{2k}{k}\,.
\end{split}
\end{equation}
From Stirling's formula, it is easy to see that for large $k$, the moments decrease as the $3/2$ power,
\begin{equation}
m_{2k} = \frac{1}{\sqrt{\pi} k^{3/2}} + \mathcal{O}(k^{-5/2})\,,
\end{equation}
but they are nonzero for any finite $k$. In contrast, the Chebyshev expansion actually truncates at finite order. Indeed, by inspection we have,
\begin{equation}
\psi_{\sf{sc}}(\lambda) = \frac{T_0(\lambda) - T_2(\lambda)}{\pi \sqrt{1-\lambda^2}}\, ,
\end{equation}
so that,
\begin{equation}
\begin{split}
\lim_{D\to\infty}\frac{1}{D} \trace(\brmT_k) &= \int_{-\infty}^{\infty} T_k(\lambda) \, \psi_{\sf{sc}}(\lambda) \, d\lambda \\
&=\int_{-\infty}^{\infty} T_k(\lambda) \left(\frac{T_0(\lambda) - T_2(\lambda)}{\pi \sqrt{1-\lambda^2}}\right) \, d\lambda\\
&= \delta_{0,k} - \frac{1}{2} \delta_{2,k}\,.
\end{split}
\end{equation}
This result holds only as $D\to\infty$; there will be finite-size corrections to this result for finite~$D$. Moreover, for our simulations, there is additional error resulting from the variance of our estimators; nevertheless, we still observe excellent agreement with these infinite-$D$ limiting predictions. See Figure~\ref{fig:wishwig}.

\begin{figure}[t!]
\centering
\begin{subfigure}[b]{0.49\textwidth}
  \includegraphics[width=\textwidth]{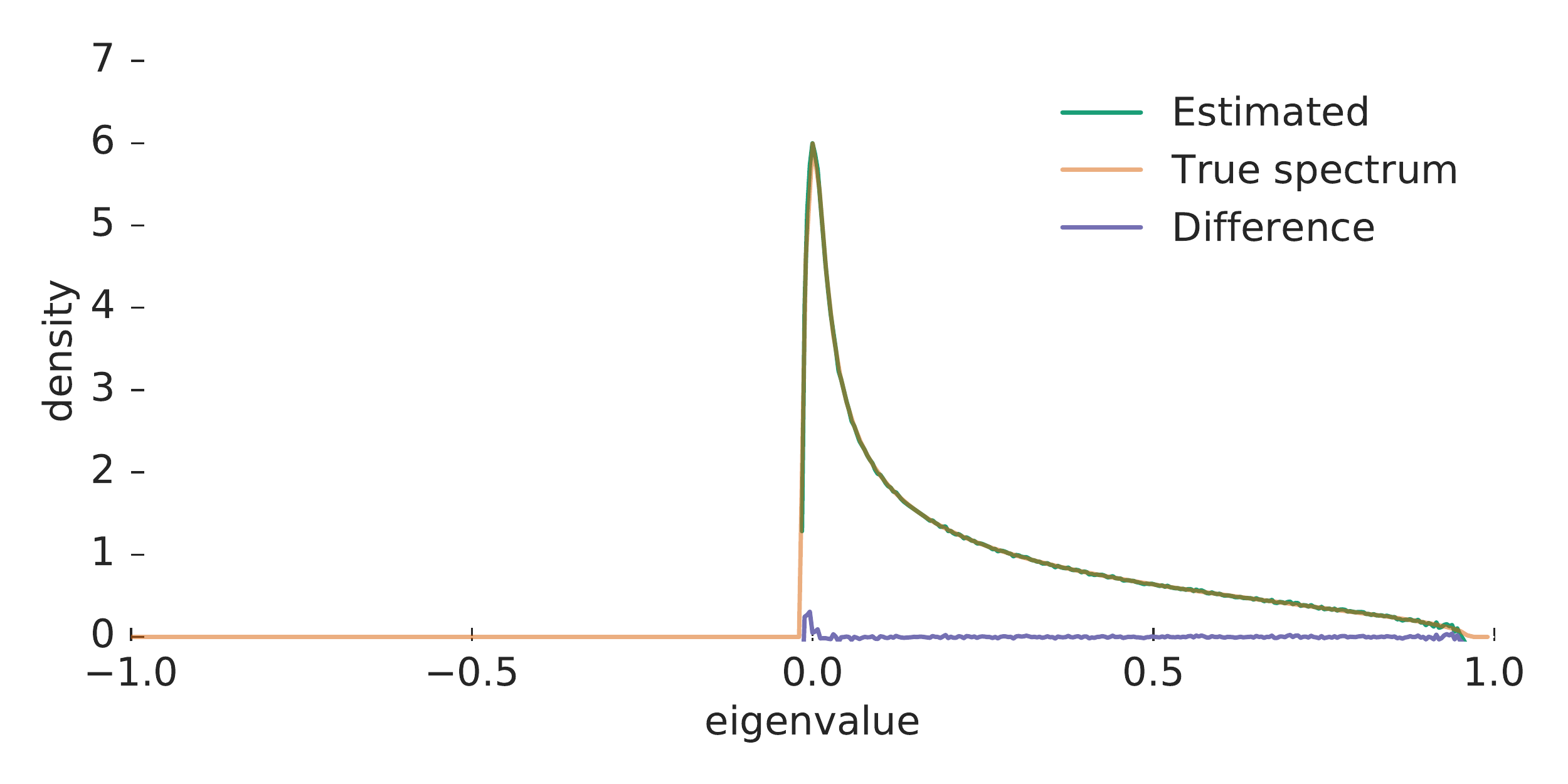}%
  \caption{$0.04$ Wigner + $0.96$ Wishart}%
\end{subfigure}~%
\begin{subfigure}[b]{0.49\textwidth}
  \includegraphics[width=\textwidth]{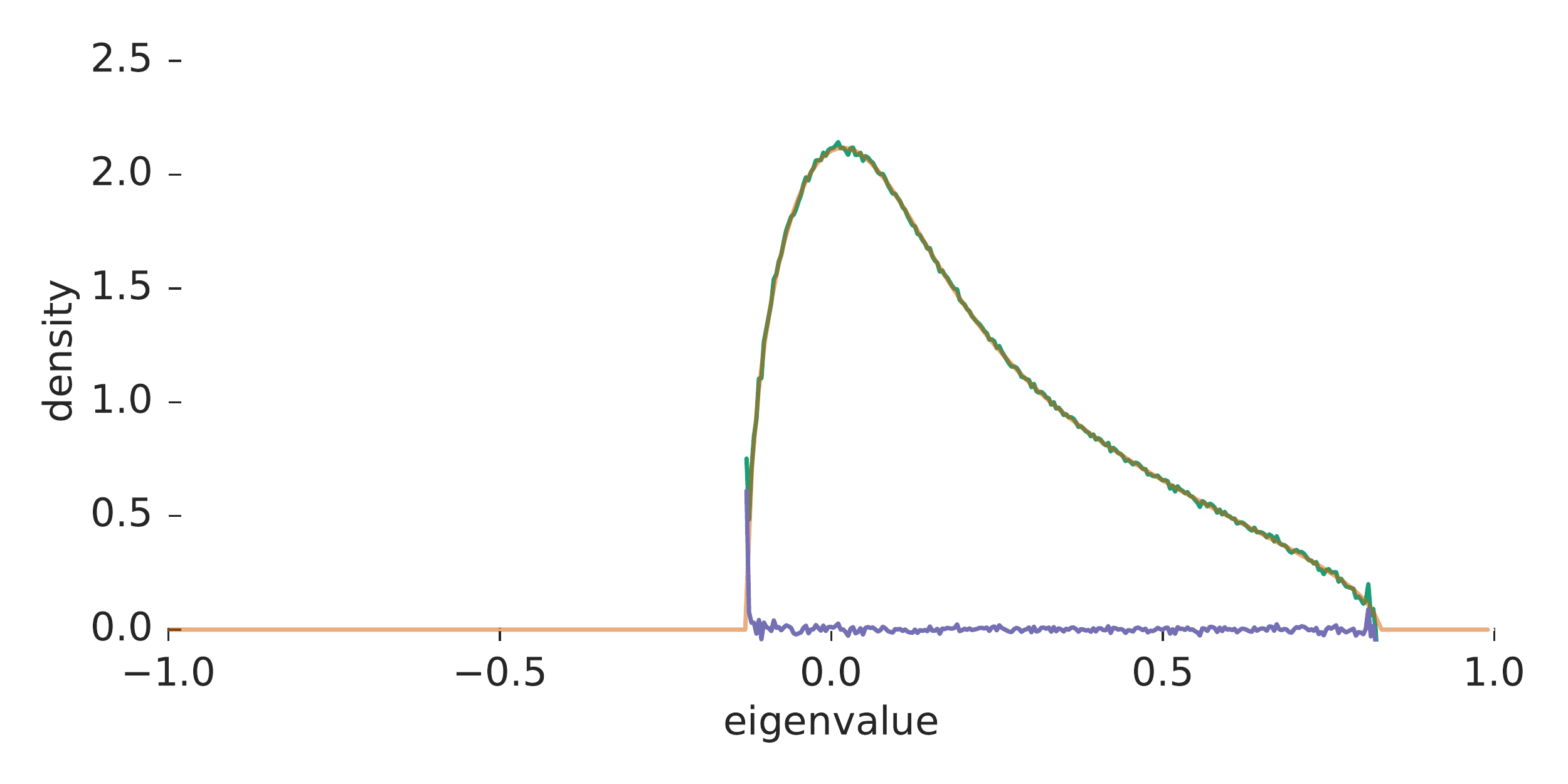}%
  \caption{$0.2$ Wigner + $0.8$ Wishart}%
\end{subfigure}\\
\begin{subfigure}[b]{0.49\textwidth}
  \includegraphics[width=\textwidth]{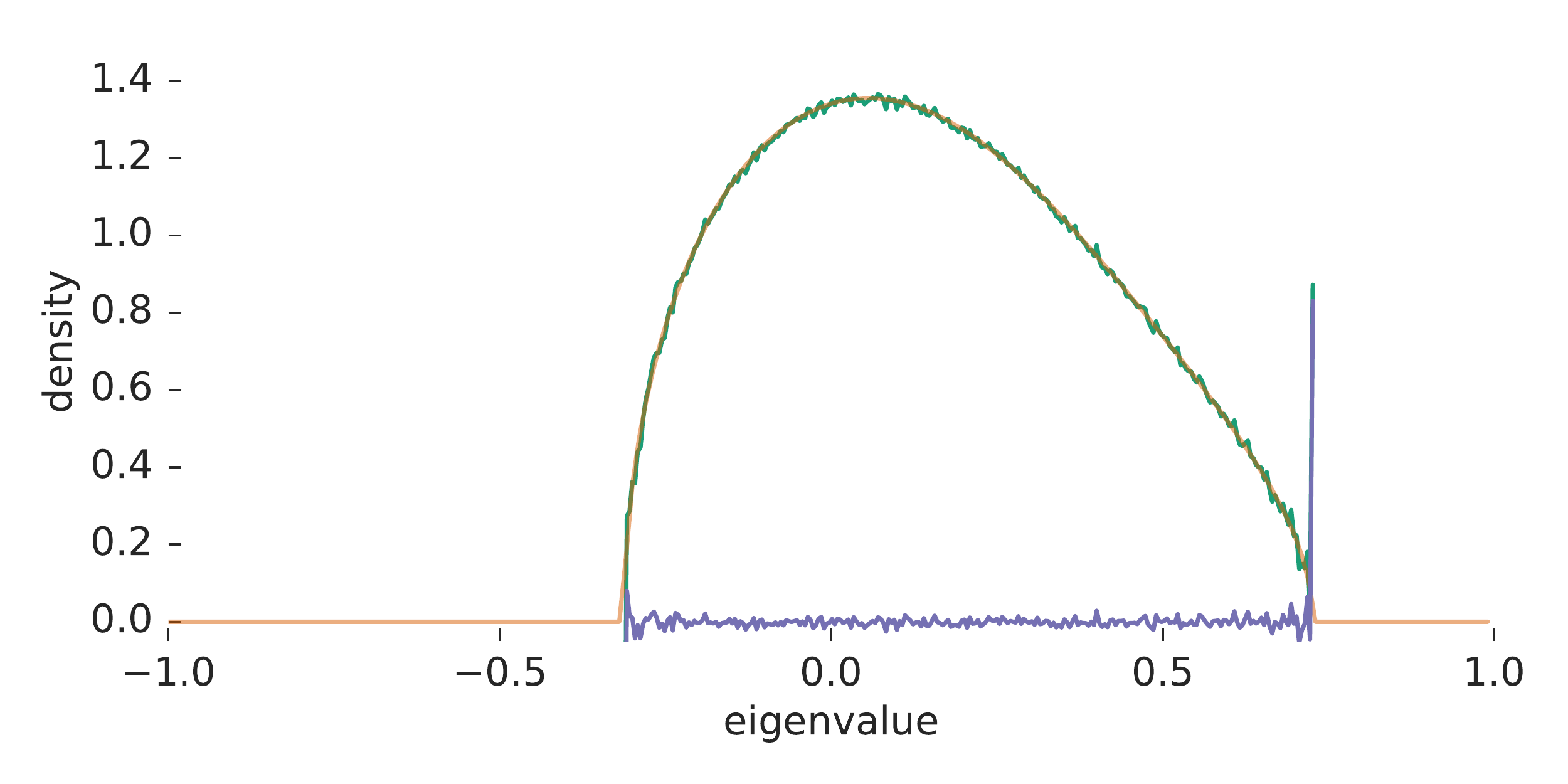}%
  \caption{$0.4$ Wigner + $0.6$ Wishart}%
\end{subfigure}~%
\begin{subfigure}[b]{0.49\textwidth}
  \includegraphics[width=\textwidth]{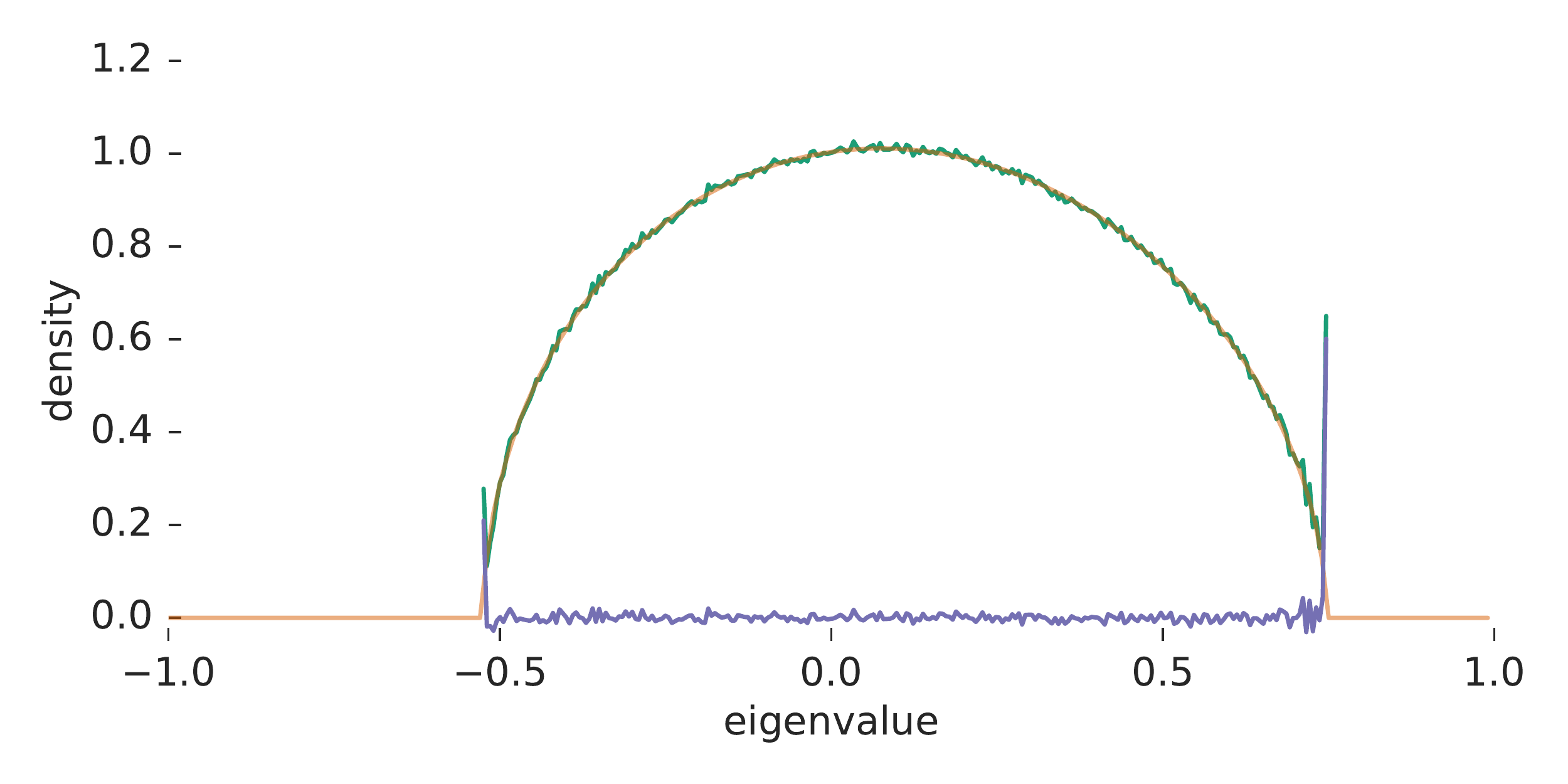}%
  \caption{$0.6$ Wigner + $0.4$ Wishart}%
\end{subfigure}\\
\begin{subfigure}[b]{0.49\textwidth}
  \includegraphics[width=\textwidth]{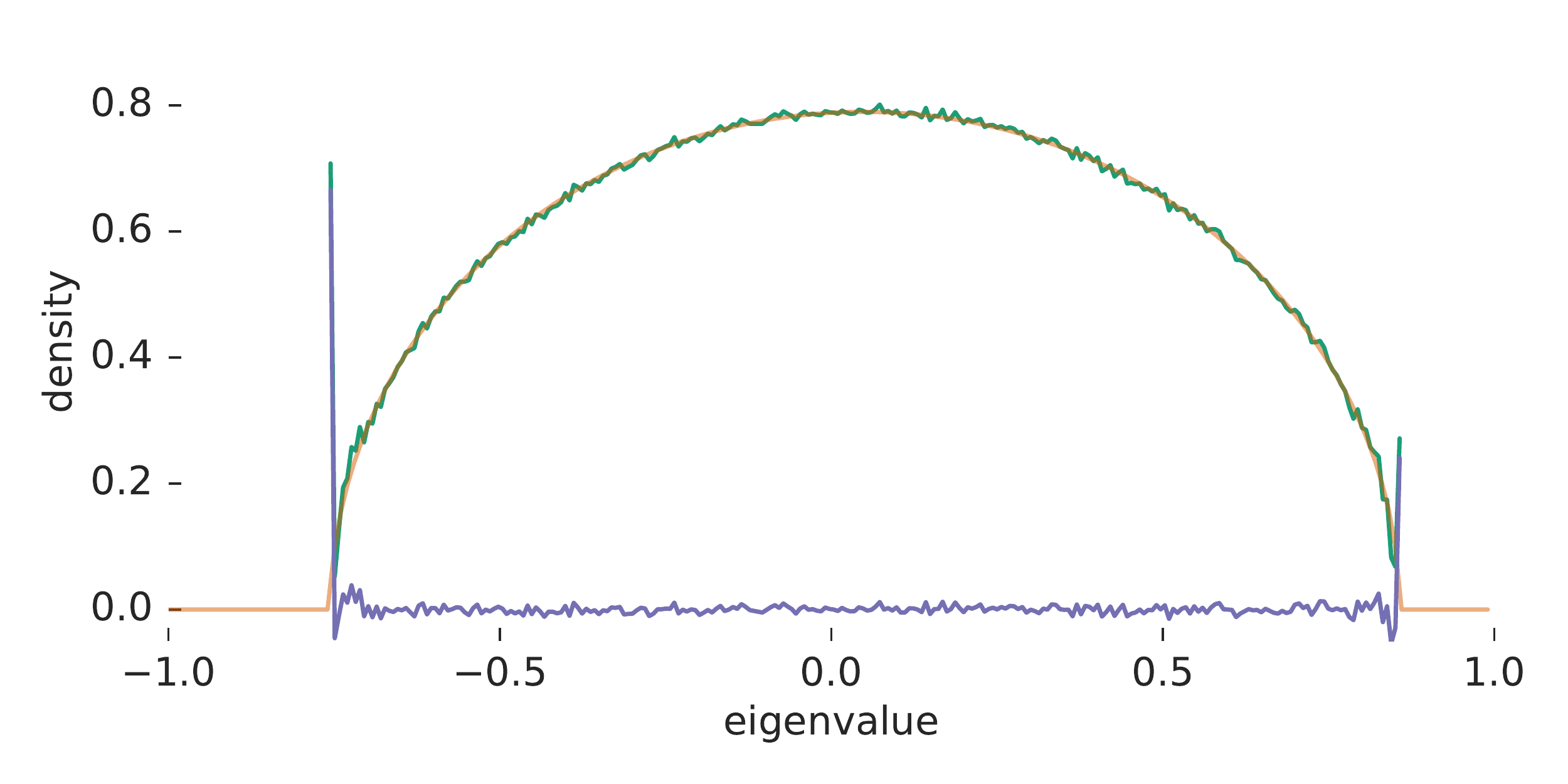}%
  \caption{$0.8$ Wigner + $0.2$ Wishart}%
\end{subfigure}~%
\begin{subfigure}[b]{0.49\textwidth}
  \includegraphics[width=\textwidth]{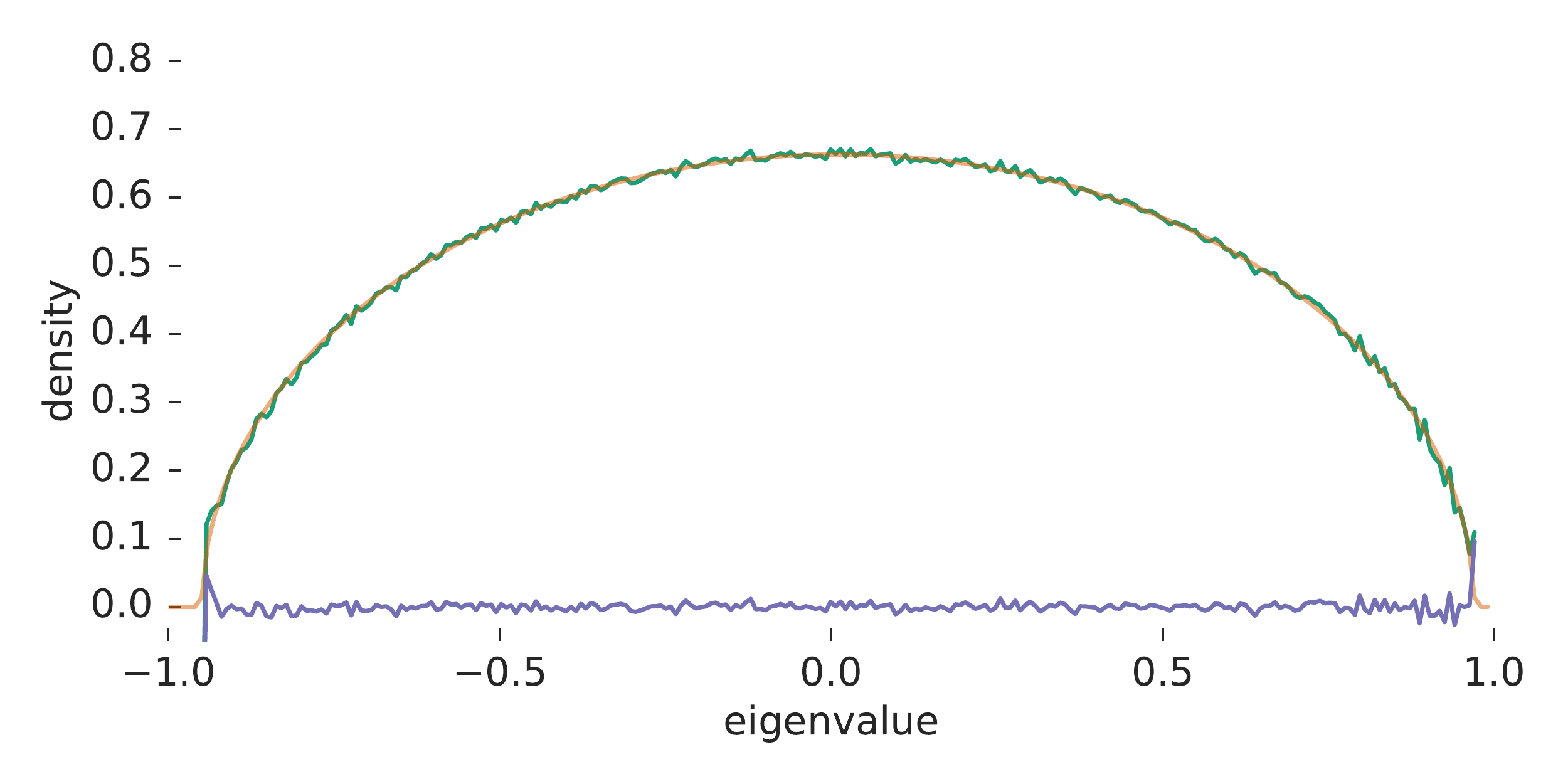}%
  \caption{$0.96$ Wigner + $0.04$ Wishart}%
\end{subfigure}
\caption{The estimated and true spectral densities of six matrices are shown.  These matrices are convex sums of Wishart and Wigner matrices, so their spectra are known in closed form to provide ground truth for the randomized estimation. Estimates were produced using 500 samples with~${\kappa = 5000}$.  Although some estimates exhibit variance due to boundary effects, the overall differences are small.}
\label{fig:wishwig}
\end{figure}

\subsubsection{Wishart Ensemble}
Next we consider matrices~$\brmA$ drawn from the Wishart ensemble. In particular, we consider~${\brmA=\brmB\brmB^{\trans}}$ where~${\brmB\in\mathbb{R}^{D\times N}}$ and~${\rmB_{ij} \sim \mathcal{N}(0,\frac{\sigma^2}{N})}$. If both~$N$ and~$D$ are taken to infinity at the same rate, i.e.,~${D,N \to \infty}$ with~${D = \phi N}$ for some constant~$\phi$, then the limiting spectral density can be shown to obey the Marchenko-Pastur distribution~\citep{MarchenkoPastur},
\begin{equation}
\label{eqn:rho_mp}
\psi_{\sf{mp}}(\lambda) = \begin{cases}
  \psi(\lambda) & \text{if}\ \phi < 1 \\
  (1-\phi^{-1})\delta(\lambda)+ \psi(\lambda) & \text{otherwise}
 \end{cases}\,,
\end{equation}
where,
\begin{equation}
\label{eqn:mp_gap}
\psi(\lambda) = \frac{1}{2\pi \lambda \sigma^2 \phi} \sqrt{(\lambda - \lambda_{-})(\lambda_{+} - \lambda)}\;\indicator_{\lambda_-\le \lambda\le \lambda_+}\quad\text{and}\quad\lambda_{\pm} = \sigma^2 (1\pm\sqrt{\phi})^2 \,.
\end{equation}
The Wishart matrix $\brmA$ is low rank if $\phi > 1$, and the nullity equals~${(N-D)/N = 1-\phi}$, which is the source of the delta function density at $0$ in that case. Note that the minimum eigenvalue is $\lambda_-$, which may be larger than $0$, and depends on the value of $\phi$. The moments of the Marchnko-Pastur distribution are polynomials in $\phi$ with coefficients equal to the Narayana numbers,
\begin{equation}
\begin{split}
m_{k}  &= \lim_{D\to\infty}\frac{1}{D}\trace(\brmA^{k}) = \int_{-\infty}^{\infty} \lambda^{k} \, \psi_{\sf{mp}}(\lambda) d\lambda \\
&= \sigma^{k} \sum_{j=0}^{k-1} N(k,j+1)\,\phi^j\\
&= \sigma^{k}\, {}_2F_1(1-k,-k,2,\phi)\,.
\end{split}
\end{equation}
where $_2F_1$ is the Gauss hypergeometric function and $N(j,k)$ are the Narayana numbers,
\begin{equation}
N(k,j) = \frac{1}{k}\binom{k}{j}\binom{k}{j-1}\,.
\end{equation}
Although relatively complicated, the Chebyshev coefficients can be written as a sum over the $m_k$,
\begin{equation}
\begin{split}
\lim_{D\to\infty}\frac{1}{D} \trace(\brmT_k) &= \int_{-\infty}^{\infty} T_k(\lambda) \, \psi_{\sf{mp}}(\lambda) \, d\lambda \\
&= \frac{k}{2} \sum_{j=0}^{\left \lfloor{k/2}\right \rfloor} \frac{(-1)^j}{k-j}\binom{k-j}{j}2^{k-2j}m_{k-2j}\\
&= \frac{k}{2} \sum_{j=0}^{\left \lfloor{k/2}\right \rfloor} \frac{(-1)^j}{k-j}\binom{k-j}{j}(2\sigma)^{k-2j}\, {}_2F_1(1-k+2j,2j-k,2,\phi)\,.
\end{split}
\end{equation}
One reason that this expansion is more complicated than that of the Wigner case is because the support of the spectrum is in the range $[\lambda_-, \lambda_+]$, whereas the Chebyshev polynomials are a natural basis on $[-1,1]$. Indeed, a much simpler expansion arises when $\brmA$ is scaled and shifted so that its spectrum has support on $[-1,1]$. For $\phi<1$, letting $\bm{\mathrm{\tilde{A}}}  = \alpha \brmA + \beta \brmI_D$ with $\alpha = \frac{1}{2\sigma^2 \sqrt{\phi}}$ and $\beta = - \frac{1+\phi}{2\sqrt{\phi}}$ gives a spectral density $\tilde{\psi}_{\sf{mp}}(\lambda)$,
\begin{equation}
\tilde{\psi}_{\sf{mp}}(\lambda) = \frac{1}{\alpha}\psi_{\sf{mp}}(\frac{\lambda - \beta}{\alpha}) = \frac{2}{\pi}\frac{\sqrt{1-\lambda^2}}{1+2\lambda \sqrt{\phi}+\phi}\,.
\end{equation}
The Chebyshev coefficients can be found to equal,
\begin{equation}
\lim_{D\to\infty}\frac{1}{D} \trace(\bm{\mathrm{\tilde{A}}}_k) = \begin{cases}
\frac{\pi}{2} & k = 0\\
-\frac{\pi}{4}\sqrt{\phi} & k = 1 \\
-\frac{\pi}{4}(1-\phi)(-\sqrt{\phi})^{k-2} & k \ge 2
\end{cases} .
\end{equation}
Again, these results hold only as $D\to\infty$, and again there is still additional variance from our estimators, but we still observe excellent agreement with these infinite-$D$ limiting predictions. See Figure~\ref{fig:wishwig}.

\subsubsection{Wishart plus Wigner}
As described by~\citet{pennington2017geometry}, the sum of a Wishart and a Wigner matrix can serve as a simplified model of the Hessian matrix of the loss function of machine learning models, as we now review. Suppose we consider an arbitrary (possibly non-convex) parameterizable model function $f(\theta) \equiv f_\theta(x)$ of some data $x$. If the model is evaluated with a convex loss function $l(f(\theta))$, then the Hessian matrix has a simple form. In particular, the second derivative of the loss with respect to the parameters $\theta$ is,
\begin{equation}
\begin{split}
\rmH_{ij} &= \frac{\partial^2 l(f(\theta))}{\partial \theta_i \partial \theta_j} = \frac{\partial}{\partial \theta_i} \frac{\partial l(f(\theta))}{\partial f^\alpha(\theta)} \frac{\partial f^\alpha(\theta)}{\partial \theta_j} =  \frac{\partial^2 l(f(\theta))}{\partial f^\alpha(\theta)\partial f^\beta(\theta)}\frac{\partial f^\alpha(\theta)}{\partial \theta_j}\frac{\partial f^\beta(\theta)}{\partial \theta_i} + \frac{\partial l(f(\theta))}{\partial f^\alpha(\theta)} \frac{\partial^2 f^\alpha(\theta)}{\partial \theta_i\partial \theta_j}\\
&= [\brmH_{l}]_{ij} + [\brmH_{f}]_{ij}\,,
\end{split}
\end{equation}
where,
\begin{equation}
[\brmH_l ]_{ij}= \rmJ_{i\alpha} \rmJ_{j\beta}  \frac{\partial^2 l(f(\theta))}{\partial f^\alpha(\theta)\partial f^\beta(\theta)}\,,\quad\text{and}\quad [\rmH_f]_{ij} = \frac{\partial l(f(\theta))}{\partial f^\alpha(\theta)} \frac{\partial^2 f^\alpha(\theta)}{\partial \theta_i\partial \theta_j}\,,
\end{equation}
and the Jacobian matrix is,
\begin{equation}
\rmJ_{i\alpha} = \frac{\partial f^\alpha(\theta)}{\partial \theta_i} \,.
\end{equation}
Because we've assumed $l$ to be convex, $\brmH_l$ is PSD. A simple approximation which captures this additive structure of the Hessian is outlined in~\cite{pennington2017geometry}, and involves approximating $\brmH_l$ as a Wishart matrix and $\brmH_f$ as a Wigner matrix. In particular, we will consider the approximation,
\begin{equation}
\brmH = \gamma\brmB  + (1-\gamma) \brmC\,,
\end{equation}
where $\brmB$ is standard Wigner and $\brmC$ is standard Wishart, as defined above, and $\gamma$ is an interpolation parameter that can be interpreted as a measure of the loss value at convergence. When~${\gamma=1}$, the model has converged to a high-index saddle point with a large loss value, at which point the Hessian has no contribution from the Wishart matrix $\brmC$ and has an equal number of positive and negative eigenvalues. The global minimum corresponds to ${\gamma = 0}$, at which point there is no contribution from the Wigner matrix $\brmB$ and the Hessian has no negative eigenvalues.  As described in~\cite{pennington2017geometry}, the number of negative eigenvalues can be computed as a function of $\epsilon \equiv \frac{\gamma^2}{2(1-\gamma)^2}$, which measures the relative contribution of $\brmB$ and $\brmC$ to $\brmH$. The details are not particularly illuminating, but the result is,
\begin{equation}
\alpha(\epsilon) = \frac{1}{\pi}\left[\frac{\xi_{-}(\xi_{+}-2\sqrt{2}\xi \epsilon)}{12\sqrt{3}\xi^2 \phi^2 \epsilon} + \arctan\left(\frac{\sqrt{3}\xi_{-}}{\xi_{+}-2\sqrt{2}\xi\epsilon}\right) - \frac{1}{\phi}\arctan\left(\frac{\sqrt{3}\xi_{-}}{\xi_{+}+4\sqrt{2}\xi\epsilon}\right) \right]_+\,,
\end{equation}
where $\xi_{\pm} = \xi^2 \pm \chi_2$ and,
\begin{equation}
\xi = 2^{-1/6} \left(\chi_1 + \sqrt{\chi_1^2 -2 \chi_2^3}\right)^{1/3}\,,\quad \chi_1 = 4\epsilon^3 + 9 \epsilon^2 \phi + 18 \epsilon^2 \phi^2\,,\qquad \chi_2 = 2\epsilon^2 + 3\epsilon\phi - 3\epsilon \phi^2\,.
\end{equation}
Figure~\ref{fig:wishwig-index} compares the empirical estimation with these analytic predictions.

\begin{figure}[t!]
\centering
\includegraphics[width=0.8\textwidth]{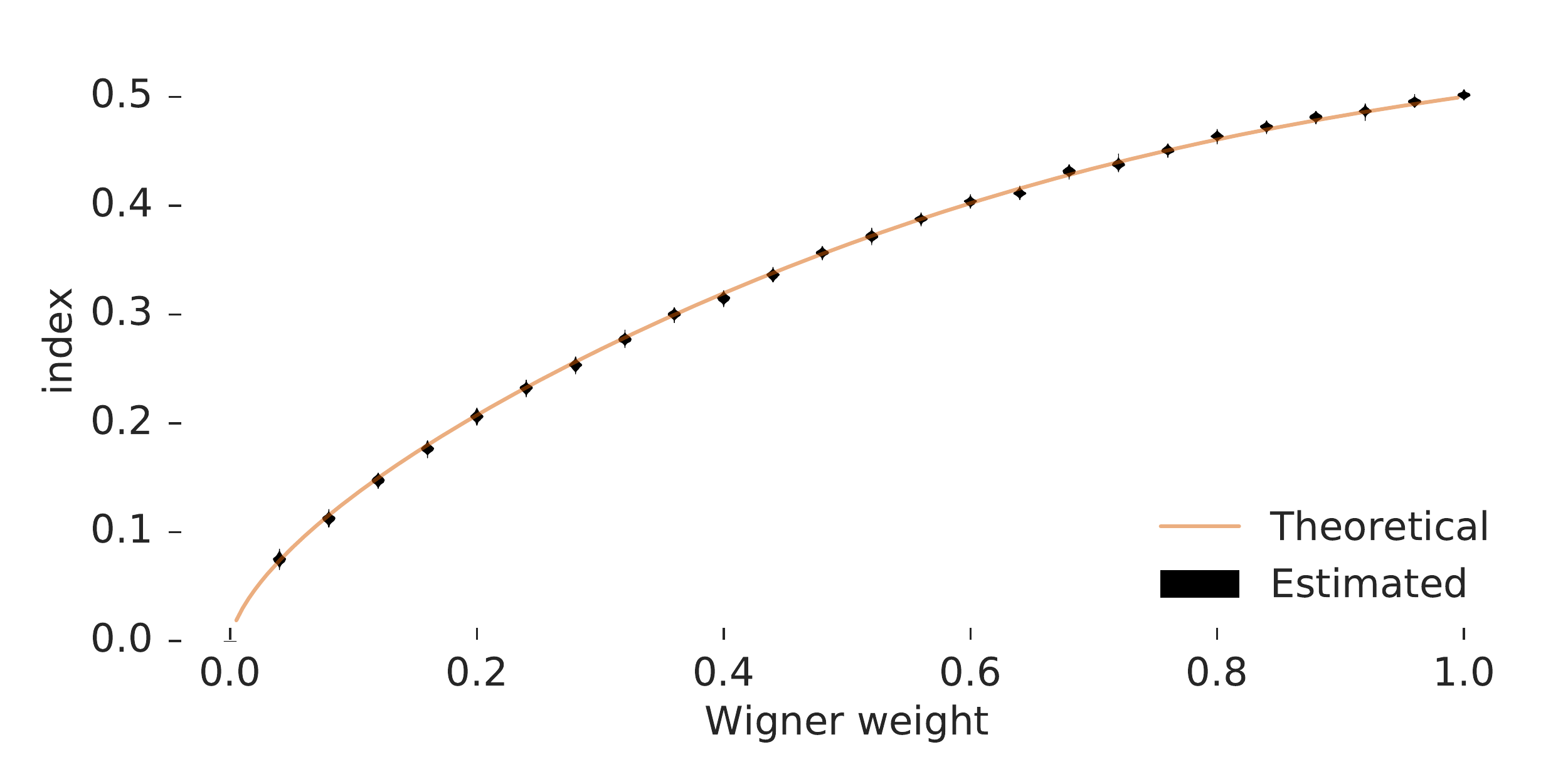}%
\caption{An example of computing a useful statistic of a large matrix via its spectral density.  Here is plotted theoretical and empirical estimates of the index (fraction of negative eigenvalues) of a Wishart+Wigner matrix, as a function of the Wigner fraction.  The empirical data here are violin plots computed via the bootstrap over 100 Monte Carlo estimates of the mean spectral density.}
\label{fig:wishwig-index}
\end{figure}

\section{Discussion}
In this work we have described a framework for estimation of spectral densities of large matrices.  Our primary objective has been to construct a tool for the empirical investigation of matrices that can only be probed in implicit and noisy ways.  Our overall approach has been to combine ideas that have been proposed across several different fields into an estimation framework that provides unbiased estimates with controlled variance, even when the matrices become large.  We use this framework to then construct smoothed estimates of the overall density.  This smoothing necessarily introduces bias, but it allows us to visualize spectra whose atomic nature would otherwise make them difficult to interrogate.  In future work, we intend to use this tool in the analysis of high dimensional statistical and machine learning models, where the local geometry of the optimization surface---as captured by the Hessian and Fisher matrices---is challenging to examine.

\bibliographystyle{plainnat}
\bibliography{refs}

\end{document}